\documentclass{article}

\usepackage{microtype}
\usepackage{graphicx}
\usepackage{subcaption}
\usepackage{booktabs} 

\usepackage{hyperref}

\usepackage{fontawesome5}

\usepackage[accepted]{icml2026}

\usepackage{amsmath}
\usepackage{amssymb}
\usepackage{mathtools}
\usepackage{amsthm}
\usepackage{multirow}

\usepackage{enumitem}

\usepackage[capitalize,noabbrev]{cleveref}

\theoremstyle{plain}
\newtheorem{theorem}{Theorem}[section]

\newtheorem{lemma}[theorem]{Lemma}

\theoremstyle{definition}
\newtheorem{definition}[theorem]{Definition}
\newtheorem{assumption}[theorem]{Assumption}
\theoremstyle{remark}

\newtheorem*{reptheorem}{Theorem}

\usepackage[textsize=tiny]{todonotes}

\icmltitlerunning{Toward Identifiable Sparse Autoencoders}

\begin{document}

\twocolumn[
  \icmltitle{Toward Identifiable Sparse Autoencoders}



  \icmlsetsymbol{equal}{*}

  \begin{icmlauthorlist}
    \icmlauthor{Walter Nelson}{ista}
    \icmlauthor{Theofanis Karaletsos}{pyramidal}
    \icmlauthor{Francesco Locatello}{ista}
  \end{icmlauthorlist}

  \icmlaffiliation{ista}{Institute of Science and Technology Austria}
  \icmlaffiliation{pyramidal}{Pyramidal Inc. and Achira Inc., USA}

  \icmlcorrespondingauthor{Walter Nelson}{walter.nelson@ista.ac.at}

  \icmlkeywords{machine learning, sparse autoencoders, identifiability, large language models}

  \vskip 0.3in
]



\printAffiliationsAndNotice  

\begin{abstract}
    Recently, sparse autoencoders (SAEs) have emerged as an attractive tool for interpreting and interacting with representations in practical neural networks. While it is common empirical folklore, we also show theoretically that SAEs are highly unstable: different training runs are likely to produce different concept dictionaries and sparse codes. We characterize the model properties that hinder the stability of real-world SAEs, and address each of these problems through minimal changes to the architecture and training procedure. Together, these changes yield two versions of an \textbf{i}dentifiable SAE (iSAE), a variant of the standard TopK SAE with lower reconstruction error and improved stability. We explain this improvement theoretically by connecting SAEs with traditional dictionary learning approaches, and show that the dictionaries learned in practice satisfy an approximate restricted isometry condition, rendering the corresponding sparse codes in those models near-identifiable.
    \center{\href{https://github.com/wjn0/isaes}{\faGithub\ Code}}
\end{abstract}

\section{Introduction}
Sparse autoencoders (SAEs) decompose high-dimensional representations into a sparse linear combination of concepts from a large, learned dictionary. Due to their simplicity, flexibility, and well-characterized engineering \citep{scalingSAEs}, they have seen widespread adoption across vision, language, and other modalities.

In these settings, SAEs are used as a practical interface for analyzing and intervening on the internal representations. For example, individual dictionary atoms are often interpreted as human-meaningful concepts (e.g., topics, syntactic patterns, or visual components), enabling post-hoc interpretability. They have also been used for steering, where selectively activating or suppressing specific atoms alters model behavior in a targeted way. This line of work frequently relies, implicitly or explicitly, on the assumption that the sparse linear decompositions correspond to stable (or even semantically meaningful) structure in the underlying representation. In this work, we avoid this assumption by treating SAEs not as \textit{a priori} interpretable objects, but as statistical estimators for the classical dictionary learning problem. Our goal is to investigate under what conditions their learned atoms and codes are in fact statistically identifiable.

\begin{figure}[t]
    \includegraphics[width=0.5\textwidth]{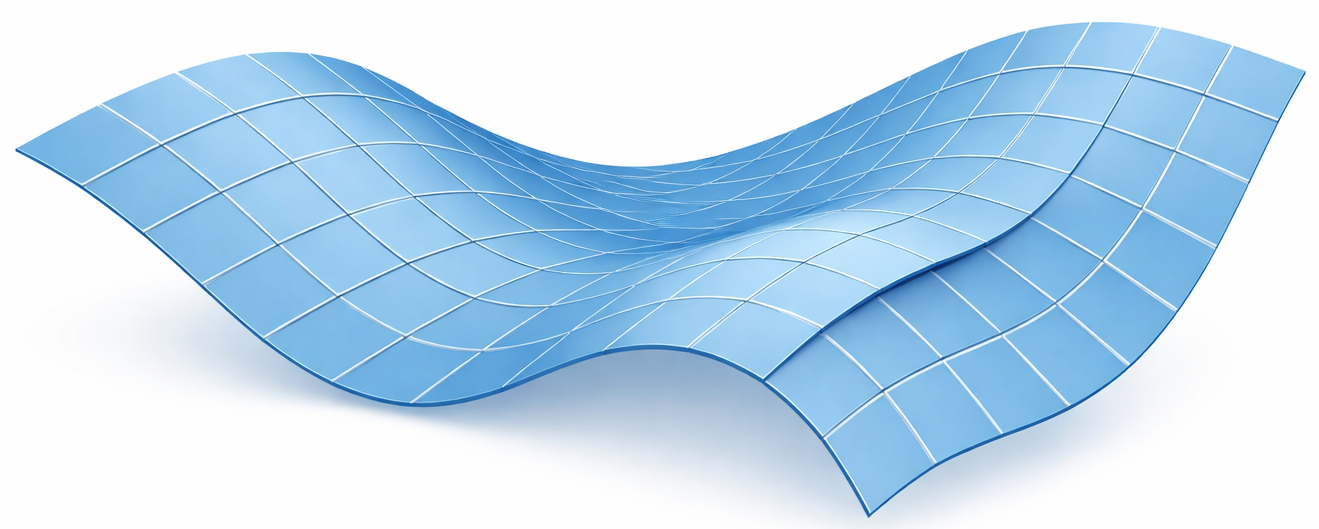}
    \caption{Sparse autoencoders approximate nonlinear manifolds (dark blue, mostly occluded) with linear patches (light blue). We show that identifiability hinges on four key ingredients: (a) the approximation being good enough (low reconstruction error), (b) the manifold being sampled densely enough, (c) co-occurring concepts being distinct enough (an approximate restricted isometry property), and (d) sufficiently diverse concept co-occurrence patterns. When these hold, individual patches are identifiable, rendering the whole model statistically identifiable.}
    \label{fig:fig1}
\end{figure}

We are motivated by prior work showing that modern SAEs suffer from various forms of non-identifiability \citep{positionFeatureConsistencySAE}. In particular, prior work has shown that two SAEs trained on the same data are not guaranteed to yield the same concept dictionary \citep{positionFeatureConsistencySAE} or sparse codes \citep{differentFeatures}, a phenomenon similar to the non-identifiability seen in disentangled representation learning~\cite{locatello2019challenging}. This inconsistency potentially muddies the interpretation of the learned features, defeating the purpose of SAEs in the first place. SAE identifiability is also important when using them to interpret data and models in scientific applications, where identifiability ensures that statistical power is stable \cite{mencattini2025exploratory,donhausertowards}.

The classical approaches to the compressed sensing and dictionary learning problems faced similar non-identifiability issues, which are mostly due to the fact that the dictionaries learned are highly overcomplete. We draw inspiration from these earlier works to improve the stability of the SAEs widely used in practice today. Concretely:
\begin{itemize}
    \item We show theoretically (Theorem \ref{thm:impossibility}) and empirically that consistently learning the dictionary is \textbf{not sufficient} for consistently learning the codes, motivating our proposed \textbf{code consistency} metrics.
    \item We show theoretically that when the learned dictionary satisfies an approximate restricted isometry property (aRIP), the sparse codes of an SAE are provably near-identifiable (Theorem \ref{thm:identifiability}).
    \item Empirically, we show that two variants of our model identifiable SAE (\textbf{iSAE}) exhibit improved stability and reconstruction over standard TopK baselines.
\end{itemize}

\section{Related Work}
A growing body of recent work applies SAEs as a tool for post-hoc interpretability, including mechanistic interpretability, particularly in large language models \citep{elhage2022toy,olah2020zoom,bricken2023monosemanticity}. In these settings, SAEs are trained on intermediate activations (e.g., residual streams or MLP layers), and individual dictionary elements are interpreted as corresponding to human-meaningful concepts such as semantic topics, syntactic patterns, or behavioral circuits. This approach has enabled analyses of feature superposition \citep{elhage2022toy}, circuit structure \citep{olah2020zoom}, and the localization of model behaviors, as well as interventions in which specific features are ablated or amplified to steer model outputs \citep{bricken2023monosemanticity,turner2024steering}. These applications rely on the empirical observation that SAE features are often sparse, localized, and partially interpretable, even in highly overcomplete regimes, provided that reconstruction performance is good enough.

Accordingly, most work on SAEs has focused on improving reconstruction error \citep{scalingSAEs,batchTopK} at the frontier of the reconstruction-sparsity tradeoff \citep{archetypalSAE}. However, the evidence is unclear as to whether reconstruction performance is sufficient to give stability of the atoms and sparse codes. \citet{positionFeatureConsistencySAE} point out the importance of the stability of the learned dictionary, arguing that TopK SAEs are stable according to this measure under re-trainings using a standard cosine similarity measure on the concept dictionary. \citet{archetypalSAE} propose to constrain the concepts in the dictionary to the convex hull of the data, finding that this improves stability according to the same cosine similarity measure. Building on prior work which shows that two SAEs with the same dictionary can produce vastly different sparse codes for the same data \citep{differentFeatures}, we show that code identifiability is not implied by dictionary identifiability. Accordingly, we instead analyze the stability of both the dictionary and the sparse codes.

In most works on mechanistic interpretability, identifiability is assessed subjectively, by checking the alignment with human-interpretable concepts \citep{SAEbench}. In contrast, the historical view in sparse coding and dictionary learning is to assess when signals admit a \textit{stable} sparse linear decomposition, without assigning a particular interpretation to it. Our work aims to bridge this gap by explicitly analyzing the conditions under which the representations learned by SAEs are stable and statistically identifiable, and by proposing metrics that capture this stability at the level of both dictionaries and codes.

Our theoretical characterization is based on the restricted isometry property (RIP) from compressed sensing. \citet{candesTao} show that when a known dictionary satisfies the RIP, sparse codes are identifiable. Unfortunately, the classical RIP condition is combinatorially difficult to verify, a situation that does not become any easier when the dictionary is unknown \citep{ERSPUD}. To address this, we propose a data-driven relaxation of the RIP condition we refer to as approximate RIP (aRIP), which enforces the property only where data is actually observed. We show theoretically and empirically that this is sufficient for near-identifiability of both the dictionary and the codes for a given data distribution. Appealingly, given a trained SAE, the aRIP condition is easily measured.

\section{Unifying SAEs and Dictionary Learning}
In this section, we contrast sparse autoencoders and the classical approaches to dictionary learning, with the goal of unifying them in theory and practice.

\textbf{Sparse autoencoding.} Consider a distribution $P(\mathbf{x})$ over representations $\mathbf{x} \in \mathbb{R}^N$. A typical sparse autoencoder (SAE) takes the form:
\begin{align}
    \label{eqn:sae}
    \mathbf{z} &= \mathbf{f}(\mathbf{x}) &\hat{\mathbf{x}} &= D\mathbf{z} + \mathbf{b'}
\end{align}
where $\mathbf{f} : \mathbb{R}^N \rightarrow \mathbb{R}^K$ is some encoder, $\mathbf{b'} \in \mathbb{R}^N$ is a learned bias term, and $D \in \mathbb{R}^{N \times K}$ is the learned dictionary of $N$-dimensional atoms (or concepts, in SAE parlance). Typically, the encoder is near-linear of the form
\begin{equation}
    \label{eqn:saeEncoder}
    \mathbf{f}(\mathbf{x}) = \sigma(W(\mathbf{x} - \mathbf{b}))
\end{equation}
where $W \in \mathbb{R}^{K \times N}$ is the learned encoder matrix, $\sigma$ is some sparsifier, and $\mathbf{b} \in \mathbb{R}^N$ is a bias term. The parameters are optimized by stochastic gradient descent to minimize $\lVert \mathbf{x} - \hat{\mathbf{x}}\rVert^2$. Generally, the ambient dimension of the code space $K \gg N$, but codes themselves $\mathbf{z}$ are $k$-sparse for some fixed $k \lll K$. The current most popular choice for $\sigma$ is the TopK function \citep{topk}, which sparsifies to the top $k$ largest coefficients. We refer to this architecture throughout as a TopK SAE, which also sets $\mathbf{b} = \mathbf{b}'$ \citep{scalingSAEs}.

\textbf{Sparse coding and dictionary learning.} In sparse coding, the dictionary $D$ is fixed, and the goal in practice is to solve the optimization problem for a given input $\mathbf{x}$:
\begin{equation}
    \label{eqn:sparseCoding}
    \min_\mathbf{z} \lVert \mathbf{x} - D\mathbf{z} \rVert_2 \text{ s.t. } \lVert \mathbf{z} \rVert_0 \leq k
\end{equation}
Traditional approaches include basis pursuit \citep{basisPursuit} and orthogonal matching pursuit (OMP) \citep{orthogonalMatchingPursuit}. Optimization is generally intractable due to the highly non-convex nature of the $\ell_0$ norm constraint and the resulting NP-hardness of the problem. Furthermore, even when it can be relaxed to the $\ell_1$ form, without conditions on $D$, the solution to the relaxation of (\ref{eqn:sparseCoding}) is known to be non-identifiable for general $\mathbf{x}$. This means that multiple sparse codes $\mathbf{z}$, $\mathbf{z}'$ are equally good at representing the same observation $\mathbf{x}$, even when the dictionary is fixed. When the dictionary is learned, the situation is even more complicated. Most algorithms, such as K-SVD \citep{ksvd}, alternately solve a relaxation of (\ref{eqn:sparseCoding}) for numerous samples and update the dictionary.

\subsection{Toward a unified approach}

Sparse autoencoders have several appealing properties. They are easily implemented in modern deep learning frameworks and trained on modern hardware using stochastic gradient descent \citep{scalingSAEs,SAEbench,archetypalSAE}. For large language models in particular, their lightweight architecture means that a large cache of input activations can be inferred while the SAE is trained ``online'', allowing training to scale to billions of tokens for the largest models \citep{SAEbench,scalingSAEs}. On the other hand, the solutions that SAEs learn are unstable in practice \citep{differentFeatures}. For example, even if in some settings the dictionaries appear to be relatively stable in practice \citep{positionFeatureConsistencySAE}, the amortized sparse codes themselves often are not \citep{differentFeatures}. This means that the same SAE model trained twice on the same dataset and model might yield different concepts and sparse codes that affect downstream applications of the model.

Dictionary learning has the opposite characteristics. In general, it requires specialized approaches to optimization \citep{onlineDictionaryLearning} that fail to scale to the million- or billion-token regime required for SAEs to be useful for interpreting modern models. However, the solutions found by dictionary learning algorithms are often \textit{identifiable} in theory and practice, meaning that the recovered dictionary and codes are learned consistently \citep{ERSPUD}. Given the high-stakes nature of the many common applications of SAEs, such as steering \citep{steeringApplication}, interpretability \citep{SAEInterpretability} and oversight \citep{oversightApplication}, identifiability seems like a ``bare minimum'' requirement, highlighting the potential for bridging the gap between dictionary learning and SAEs.

In mechanistic interpretability applications, identifiability is often framed as recovering the ``true'' data-generating concepts under the linear representation hypothesis, a form of structural identifiability \citep{nelsonID}. However, without access to the true concepts, this form of identifiability is impossible to evaluate. However, the run-to-run stability of the learned concepts (or atoms, in the language of dictionary learning) and sparse codes is easily assessed, including on real-world data. As such, we adopt the following definition of statistical identifiability for SAEs.

\begin{definition}[SAE Identifiability]
    Let $P(\mathbf{x})$ denote a data distribution supported on $\mathcal{X}$. Let $\mathbf{f}, \mathbf{f}' : \mathcal{X} \rightarrow \mathcal{Z} \subseteq \mathbb{R}^K$ denote the encoders of two $k$-sparse SAEs of the form in (\ref{eqn:sae}) trained independently on data from $P(\mathbf{x})$, and let $D, D' \in \mathbb{R}^{N \times K}$ denote their decoders. Then, the SAE model is nearly identifiable in the limit of infinite data from $P(\mathbf{x})$ if there exist $\epsilon_z, \epsilon_D \geq 0$ such that $\lVert \mathbf{f}(\mathbf{x}) - \Pi \mathbf{f}'(\mathbf{x})\rVert_2 \leq \varepsilon_z$ almost everywhere and $\lVert D - D' \Pi \rVert \leq \epsilon_D$ for some signed permutation matrix $\Pi \in \{-1,0,1\}^{K \times K}$. 
\end{definition}

Intuitively, this definition says that an SAE is identifiable if independent trainings of the model (on infinite data) are guaranteed to yield approximately the same dictionary and sparse codes, up to some trivial equivariances in the model, namely the ordering and sign of the atoms in the dictionary (and therefore the sparse codes). Clearly, this definition is a ``pre-requisite'' for identifiability of the kind often pursued in mechanistic interpretability \citep{SAEbench}: if run-to-run identifiability is poor, identifiability of some ground-truth is necessarily poor as well. The benefit to this definition is that we don't need to assume a particular data-generating process, and identifiability can therefore be assessed empirically with minimal assumptions.

Leveraging recent work on SAEs and a long history of dictionary learning results, we outline the following \textit{``dark triad''} of SAE characteristics that hinder the understanding of identifiability in both theory and practice:
\begin{enumerate}
    \item \textbf{Bidirectional features.} Most activation functions for SAEs, such as ReLU and TopK, restrict the non-zero coefficients of the sparse code $\mathbf{z}$ to be positive. As a result, \citet{absTopK} found that the dictionaries of trained SAEs often contain opposing concepts: for example, both a given atom $\mathbf{d}$ and an atom approximating its opposite $\tilde{\mathbf{d}} \approx -\mathbf{d}$ will be present in the dictionary. In practice, this cuts the representation capacity of the dictionary in half, and renders the applicability of dictionary learning theory unclear at best (note that equation (\ref{eqn:sparseCoding}) does not specify $\mathbf{z} \geq 0$), because $\mathbf{d}$ and $\tilde{\mathbf{d}}$ are \textit{coherent} (have near-maximal absolute similarity), a known barrier to identifiability \citep{candesTao}.
    \item \textbf{Dictionary conditioning.} In general, solutions to equation (\ref{eqn:sparseCoding}) are unidentified without conditions on the concept dictionary $D$. For example, $D$ must have low mutual coherence (maximum pairwise absolute concept similarity), or satisfy a condition like the restricted isometry property \citep{candesTao} in order for the sparse codes $\mathbf{z}$ to be uniquely recoverable from noisy observations. It is poorly understood whether SAEs learn dictionaries for which the corresponding sparse coding problem is identifiable.
    \item \textbf{Encoder expressiveness.} Even in the simplest case where the dictionary $D$ is fixed and known, it is known that amortized inference of sparse codes is a difficult problem. For example, \citet{lista} shows that a specialized recurrent architecture is required to learn a good mapping $\mathbf{x} \mapsto \mathbf{z}$ from observations to sparse codes, \textit{even when paired observations $(\mathbf{x}, \mathbf{z})$ are available from an oracle solver of equation (\ref{eqn:sparseCoding})}. This suggests that the simple near-linear encoder typically employed in SAEs described in equation (\ref{eqn:saeEncoder}) is not expressive enough to learn the usual solutions to the sparse coding problem (\ref{eqn:sparseCoding}), which are the only solutions for which practical identifiability theory is available.
\end{enumerate}
In the next sections, we describe the changes we make to the usual SAE model (\ref{eqn:sae}) that improve the performance of the model and advance our understanding of identifiability in this setting. The goal is to design a model with the ``best of both worlds'': the well-characterized theory and guarantees of dictionary learning, with the ease of training and scalable engineering of sparse autoencoders.

\subsection{Bidirectional features}
\label{sec:bidirectionalFeatures}

\citet{absTopK} propose to change the TopK activation function in SAEs to an absolute value form motivated by the proximal gradient approach to solving equation (\ref{eqn:sparseCoding}). Specifically, they define:
\begin{equation}
    \label{eqn:absTopK}
    (\mathrm{AbsTopK}_k(u))_i =
    \begin{cases}
    u_i, & i \in \mathcal{H}_k(u), \\
    0,   & i \notin \mathcal{H}_k(u).
    \end{cases}
\end{equation}
where $\mathcal{H}_k(u)$ gives the indices of the $k$ largest coefficients in absolute value. This addresses the first component of the ``dark triad'', improving the expressivity of the model at the same number of parameters, reducing reconstruction error, and aligning more closely with existing approaches to dictionary learning, which allow for negative coefficients as in the $\ell_0$ and $\ell_1$ solutions to equation (\ref{eqn:sparseCoding}).

\subsection{Dictionary conditioning}
\label{sec:dictionaryConditioning}

In statistics, the problem of estimating the dictionary $D$ in this setting has a long history under the name ``sparse overcomplete dictionary learning'', and has known estimation challenges. Indeed, even when $D$ is known, the codes $\mathbf{z}$ might be non-identifiable \citep{candesTao}, meaning multiple distinct sparse decompositions might exist. The issue becomes even worse under imperfect reconstruction, which is observed in practice in the SAE setting. To illustrate this failure mode, we construct a generic dictionary which can fail to identify codes, even if it approximates a reasonable observational distribution well in reconstruction. This is formalized in the following impossibility theorem.

\begin{theorem}[Identifiability Impossibility]
\label{thm:impossibility}
There exists a normalized dictionary $D \in \mathbb{R}^{N \times K}$ with the following property. Let $\mathbf{f} : \mathcal{X} \to \mathbb{R}^K$ be any $k$-sparse continuous encoder defining an SAE $(f, D)$ with reconstruction error tolerance $\epsilon \ge 0$ such that $\lVert D \mathbf{f}(x) - x \rVert \le \epsilon$ for almost all $x$ drawn from a distribution $P(x)$. Assume further that every concept in $D$ is activated with positive probability under $P(x)$. Then, there exists another continuous encoder $\mathbf{f}' : \mathcal{X} \to \mathbb{R}^K$ that achieves the same reconstruction accuracy, but differs from $\mathbf{f}$ on a set of inputs with positive probability. Moreover, the sparsity patterns of $\mathbf{f}(x)$ and $\mathbf{f}'(x)$ differ with positive probability.
\end{theorem}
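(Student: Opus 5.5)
The plan is to build $D$ with a built-in exact redundancy, so that some atom can always be re-expressed by other atoms with no change to the reconstruction $D\mathbf{f}(x)$. Concretely, take $N \ge 2$, $k \ge 2$, $K \ge 3$ and put three unit atoms in a common plane, e.g.
\[
\mathbf{d}_1=e_1,\qquad \mathbf{d}_2=e_2,\qquad \mathbf{d}_3=\tfrac{1}{\sqrt2}(e_1+e_2).
\]
Fill the remaining atoms with arbitrary unit vectors. Then $\mathbf{d}_3 = \tfrac{1}{\sqrt2}\mathbf{d}_1+\tfrac{1}{\sqrt2}\mathbf{d}_2$, so $D$ has a null vector $v=(\tfrac1{\sqrt2},\tfrac1{\sqrt2},-1,0,\dots,0)$ supported on $\{1,2,3\}$. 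The point of this choice is that $D$ is fixed before $\mathbf{f}$ and $P$ are given, as the statement requires. Because the redundancy is exact, any $\epsilon\ge0$ is preserved.

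Now fix a $k$-sparse continuous $\mathbf{f}$ with $\lVert D\mathbf{f}(x)-x\rVert\le\epsilon$ a.s. By hypothesis atom $3$ is active with positive probability, so $A=\{x: f_3(x)\neq 0\}$ has $P(A)>0$. This set is open in $\mathcal X$ because $f_3$ is continuous.

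Define
\[
\mathbf{f}'(x)=\mathbf{f}(x)+f_3(x)\,v,
\]
which moves the weight $f_3(x)$ from atom $3$ onto atoms $1,2$:
\[
f'_3=0,\qquad f'_1=f_1+\tfrac{f_3}{\sqrt2},\qquad f'_2=f_2+\tfrac{f_3}{\sqrt2}.
\]
This map is continuous as a composition of continuous maps. Since $Dv=0$, we get $D\mathbf{f}'(x)=D\mathbf{f}(x)$ for every $x$, so the reconstruction accuracy is identical. On $A$ we have $\mathbf{f}'(x)\neq\mathbf{f}(x)$, because the third coordinate changes from nonzero to $0$. The sparsity pattern therefore differs on all of $A$, which has positive probability, giving the ``moreover'' claim.

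The main obstacle is keeping $\mathbf{f}'$ $k$-sparse. If $f_1(x)$ and $f_2(x)$ are both zero on $A$, the support of $\mathbf{f}'(x)$ can grow by one: index $3$ is removed but indices $1$ and $2$ are both added. I would try two fixes.

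\textbf{First fix.} Take $N\ge3$ and instead make atom $3$ a positive multiple of a single other atom, for example $\mathbf{d}_{3}=-\mathbf{d}_{1}$ or $\mathbf{d}_3=\mathbf{d}_1$. The substitution $f'_1=f_1\pm f_3$, $f'_3=0$ then never increases the support. This coherent-pair construction directly echoes the ``bidirectional features'' discussion. It is still a normalized dictionary, and exact recovery then fails trivially.

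\textbf{Fallback.} If one insists on distinct, non-antipodal atoms, use the planar triple and note that the support size is nonincreasing wherever $f_1(x)\neq0$ or $f_2(x)\neq0$. If that set meets $A$ with zero probability, use the symmetric move instead, expressing $\mathbf{d}_1$ through $\mathbf{d}_2,\mathbf{d}_3$ on the positive-probability set where $f_1\neq0$, $f_3\neq0$. Failing that, merge two active atoms into one where possible. I expect the coherent-pair construction to be the clean route, so the proof would commit to it.

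The remaining bookkeeping is twofold. First, continuity of $\mathbf{f}'$ is automatic because the modification is a global linear map applied to $\mathbf{f}(x)$, with no case split. Second, the positive-probability difference follows directly from the activation hypothesis on atom $3$.
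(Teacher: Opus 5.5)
Your committed argument, the coherent-pair construction, is correct, and it takes a different route from the paper's.

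\textbf{Your route.} With $\mathbf{d}_3 = s\,\mathbf{d}_1$, $s=\pm1$, you set $\mathbf{f}' = \mathbf{f} + f_3\,(s e_1 - e_3)$. This is a fixed linear map applied to $\mathbf{f}$, so continuity is automatic. It adds a null vector of $D$, so $D\mathbf{f}' = D\mathbf{f}$ pointwise. It never enlarges the support. It differs from $\mathbf{f}$, including in sparsity pattern, exactly on $\{f_3 \neq 0\}$, which has positive probability by the activation hypothesis. It works for every $k \ge 1$, $N \ge 1$, $K \ge 2$ at once.

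\textbf{The paper's route.} The paper places every atom inside $\mathcal{U} = \mathrm{span}\{e_1,\dots,e_k\}$. It arranges two different $k$-subsets $S=\{1,\dots,k\}$ and $S'=\{1,\dots,k-2,k+1,k+2\}$ that are both bases of $\mathcal{U}$, with $D_{k+1}, D_{k+2}$ being non-parallel vectors in the plane of $e_{k-1}, e_k$. It then discards $\mathbf{f}$ except through $\mathbf{u}(\mathbf{x}) = D\mathbf{f}(\mathbf{x})$, which it re-encodes with a fixed support via linear maps $g_S$ or $g_{S'}$. This requires a case split on whether $\mathbf{f} = g_S \circ \mathbf{u}$. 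It also needs an extra hypothesis that $D\mathbf{f}(\mathbf{x}_0)$ has a component along $\mathrm{span}\{e_{k-1}, e_k\}$. Even then it only exhibits a single disagreement point $\mathbf{x}_1$; positive probability has to be recovered separately from continuity and $\mathcal{X}$ being the support of $P$. The paper's construction also needs $2 \le k \le \min\{N, K-2\}$.

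\textbf{Comparison.} Your route is more elementary and dimension-free, and it gives both positive-probability conclusions directly. The paper's route produces a witness that does not rely on any two atoms being parallel. There, non-uniqueness comes from two different $k$-subsets spanning the same $k$-dimensional subspace, a genuine spark/RIP failure, rather than from a literally duplicated concept with coherence exactly $1$. That makes the paper's example closer in spirit to the conditioning discussion the theorem is meant to motivate. Yours makes the non-identifiability nearly tautological, though it is still a valid witness for the existential statement.

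\textbf{Minor points.} ``Positive multiple'' should read $\pm$-multiple. The assumption $N \ge 3$ is unnecessary. The planar-triple fallback should be dropped, not kept as a backup: its ``symmetric move'' can also enlarge the support, and any pointwise case split would put continuity at risk.
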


The proof is given in Appendix \ref{app:proofs}. Theorem \ref{thm:impossibility} shows that even if an SAE learns a perfectly stable dictionary, it may be the case that the corresponding sparse coding problem is non-identifiable. Indeed, the dictionary learning literature shows that if the dictionary is not fixed in advance but instead learned, the potential sources of non-identifiability are even more numerous \citep{ERSPUD}.

We now ask what conditions we can place on the SAE model in (\ref{eqn:sae}) to render both the dictionary and the codes identifiable, or nearly so. \cite{candesTao} show that a restricted isometry property (RIP) condition on the dictionary $D$ is sufficient to render the codes unique, up to permutations. The condition takes the form
\begin{equation}
    \label{eqn:rip}
    (1 - \delta) \lVert \mathbf{z} \rVert^2 \leq \lVert D\mathbf{z}\rVert^2 \leq (1 + \delta) \lVert \mathbf{z} \rVert^2
\end{equation}
for some $\delta \geq 0$, and must hold for any $k$-sparse $\mathbf{z}$. Intuitively, it says that the reconstructions must have approximately the same norms as the corresponding sparse codes. If $D$ were square, any nearly orthogonal matrix would satisfy the condition. But, when $D$ is overcomplete and is learned, the RIP condition is combinatorially difficult to enforce, because every dictionary has $K$ choose $k$ possible combinations of concepts where the condition must be checked. Intuitively, for overcomplete dictionaries, any combination of $k$ atoms must be ``nearly orthogonal''.

To address this difficulty of combinatorial complexity, we propose to enforce (\ref{eqn:rip}) only on the sparse codes $\mathbf{z}$ actually observed in the latent space of the sparse autoencoder. We refer to this condition as an approximate restricted isometry property, because it of course does not imply the usual RIP condition. However, we will show that it is sufficient to prove useful identifiability results.

\begin{definition}[Approximate RIP]
    \label{defn:arip}
    Let $P(\mathbf{z})$ be a distribution over sparse codes, and let $\mathbf{z}_1, \mathbf{z}_2$ be independent samples from $P(\mathbf{z})$. Define $S$ to be a random variable, representing the union of the indices where $\mathbf{z}_1$ and $\mathbf{z}_2$ are non-zero. Then, $D$ satisfies the \textbf{approximate restricted isometry property} (aRIP) with respect to $P(\mathbf{z})$ at level $\delta$ if $(1 - \delta) \lVert \mathbf{z}_S' \rVert^2 \leq \rVert D_S \mathbf{z}_S' \rVert^2 \leq (1 + \delta) \rVert \mathbf{z}_S' \rVert^2$ for any vector $\mathbf{z}_S' \in \mathbb{R}^K$ such that its nonzero coefficients have indices $S$, almost everywhere with respect to $P(S)$.
\end{definition}

Intuitively, this means that any union of two \textit{observed} sparsity patterns in the sparse code distribution must satisfy the usual restricted isometry property (RIP). This is in contrast to more general definitions which demand the condition hold for the union of \textit{any} two (or three) sparsity patterns to achieve identifiability. Such definitions can be used to prove identifiability results that hold for all possible observable signals $\mathbf{x} \in \mathbb{R}^N$ \citep{candesTao}. In contrast, ours will only hold for a particular distribution of signals $P(\mathbf{x})$. For these results to hold, we will need the following assumptions on the distribution $P(\mathbf{z})$, which in the case of a sparse autoencoder of the form in (\ref{eqn:sae}) is the pushforward of the data distribution $P(\mathbf{x})$ by the encoder $f$.

\begin{assumption}[Sufficient Richness]
    \label{assump:suffRichness}
    Let $\mathbf{z}$ and $\mathbf{z}'$ be independent sparse codes from $P(\mathbf{z})$, with nonzero index sets $S$ and $S'$ respectively. Then $P(\mathbf{z})$ satisfies \textbf{sufficient richness of supports} if $P(S \cap S' = \{i\}) > 0$ and $P(i \in S, j \in S', S \cap S' = \emptyset) > 0$ for any pair of atoms $i \neq j$. Intuitively, these imply that no atom can occur in a single support $S$ and no two atoms can always co-occur.
\end{assumption}

\begin{assumption}[Sufficient Diversity]
    \label{assump:suffDiversity}
    Let $\mathbf{z}$ and $S$ be as defined in Assumption \ref{assump:suffRichness}. 
    Then, $P(\mathbf{z})$ satisfies \textbf{sufficient diversity of observations} if 
    for each observed support $S$ of size $k$, there exist $k$ independent samples 
    from $\mathbf{z}_S \mid S$ such that the $k \times k$ matrix formed by stacking 
    these samples has smallest singular value at least $\zeta > 0$.
\end{assumption}

\paragraph{Measuring aRIP}
Because we don't need to sample every possible combination of sparsity patterns to quantify aRIP (Definition \ref{defn:arip}), we might reasonably hope to measure how well the condition holds for a given distribution of sparse codes $P(\mathbf{z})$ using samples from the distribution. The key algebraic relationship to notice is that if $\lambda_1, \dots, \lambda_{u}$ are the eigenvalues of the Gram submatrix $G_S = D_S^\intercal D_S$ in ascending order, the optimal aRIP constant $\delta$ satisfies $1 - \delta \leq \lambda_1 \leq \lambda_{s} \leq 1 + \delta$. When $D$ is normalized, the mean eigenvalue $\bar{\lambda} = 1$, and we have the following relationship:
\begin{equation}
    \label{eqn:traceDefn}
    \mathcal{R}_{\text{aRIP}}(S) :=\frac{1}{|S|} \sum_{i=1}^{|S|} (\lambda_i - 1)^2 = \frac{\text{tr}(G_S^2)}{|S|} - 1
\end{equation}
Intuitively, this equation measures the \textit{average} deviation of each eigenvalue from 1, whereas the aRIP condition witnesses only the \textit{largest} deviations of the eigenvalues from 1. However, the eigenvalue bound does imply that at the minimum of $\mathcal{R}_\text{aRIP}(S)$ over the space of normalized subdictionaries $D_S$, we have $\delta = 0$.

We draw inspiration from work which regularizes the input-output Jacobian of neural networks for improving representation learning \citep{irvae}, allowing us to straightforwardly estimate (\ref{eqn:traceDefn}). Let $S$ be a set of atom indices. With a slight abuse of notation due to the random dimensionality of $S$, let $\mathbf{n}_S \sim \mathcal{N}(0, \mathbf{I}_{|S|})$, so we have (up to an additive constant):
\begin{equation}
    \label{eqn:aripReg}
    \mathbb{E}_{S}[\mathcal{R}_\text{aRIP}(S)] = \mathbb{E}_{S} \left[ \mathbb{E}_{\mathbf{n}_S} \left[ \frac{\lVert D_S^\intercal D_S \mathbf{n}_S \rVert^2}{|S|} \right] \right]
\end{equation}
Ideally, we would compute (\ref{eqn:aripReg}) for all possible input pairs $\mathbf{x}_1, \mathbf{x}_2$ by setting $S = S_1 \cup S_2$ where $S_i$ is the set of nonzero indices for the sparse code $\mathbf{z}_i = \mathbf{f}(\mathbf{x}_i)$. However, this would be prohibitively expensive, so we make some approximations for computational tractability. Intuitively, for index sets of identical size, if their spans $D_{S_1}$ and $D_{S_2}$ are orthogonal to one another, then $\mathcal{R}_\text{aRIP}(S)$ is equal to the sum of the individual terms $\mathcal{R}_\text{aRIP}(S_1)$ and $\mathcal{R}_\text{aRIP}(S_2)$. On the other hand, if the spans are oblique to one another, $\mathcal{R}_\text{aRIP}(S)$ depends on the interaction between the two sparse supports, and can be much larger than the sum of the two.

Motivated by this, we leverage the amortized encoder $\mathbf{f}$ in (\ref{eqn:sae}) to determine the sets $S$ to quantify. In particular, given the unsparsified code $\tilde{\mathbf{z}}_i \in \mathbb{R}^K$ for a given input $\mathbf{x}_i$, we define $S$ to be the indices of the largest $2k$ coefficients of $\tilde{\mathbf{z}}$ in magnitude, where $k$ is the sparsity level. Because the encoder (\ref{eqn:saeEncoder}) is near-linear, this tends to have the effect of ``picking'' a sparse index set $T$ of size $k$ such that its span is maximally parallel to that of $S_i$, and setting $S = S_i \cup T$. This ensures we select ``maximally interactive'' pairs of supports, which are most likely to be problematic.

Given the relatively low sparsity levels $k$ we're interested in, a single sample is enough to estimate the expectation (\ref{eqn:aripReg}). Indeed, given the simplicity of equation (\ref{eqn:aripReg}), we will see that we can even use $\mathcal{R}_\text{aRIP}$ as a regularizer to obtain dictionaries that better satisfy the approximate RIP condition. The following theorem shows that this leaves us with a form of near-identifiability of the dictionary and sparse codes of the SAE, under mild additional assumptions on the sparse code distribution, whenever reconstruction error is low.

\begin{theorem}
    \label{thm:identifiability}
    Consider two SAEs of the form (\ref{eqn:sae}), optimized with $\mathcal{L}_\theta(\mathbf{x}) = \lVert \mathbf{x} - \hat{\mathbf{x}} \rVert^2$ such that they achieve reconstruction error $\mathcal{L}_\theta(\mathbf{x}) \leq \epsilon$ almost everywhere on $\mathcal{X}$, satisfying aRIP (Definition \ref{defn:arip}) at level $\delta$ with respect to the pushforward of the data distribution by the encoder, and satisfying Assumptions \ref{assump:suffRichness} and \ref{assump:suffDiversity}. Then, if the dictionaries of the two SAEs are $D$ and $D'$ respectively with codes $\mathbf{z}$ and $\mathbf{z}'$, we have
    \begin{align}
        \lVert D - D'\Pi \rVert &\leq \epsilon_D \\
        \lVert \mathbf{z} - \Pi \mathbf{z}' \rVert &\leq  \epsilon_z
    \end{align}
    for some signed permutation matrix $\Pi \in \{-1,0,1\}^{K \times K}$ and error terms $\epsilon_D$ and $\epsilon_z$ which are functions of $\epsilon$, $\rho$, $k$ and properties of the data distribution $P(\mathbf{x})$, where the second inequality holds almost surely. Furthermore, $\epsilon_D, \epsilon_z \xrightarrow{\epsilon,\rho \rightarrow 0} 0$.
\end{theorem}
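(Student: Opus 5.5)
We follow the template of the classical uniqueness proofs for sparse dictionary learning (e.g.\ the combinatorial arguments behind ER-SpUD-type results): first compare reconstructions, then use the isometry on the union of supports to compare codes, then use the richness of supports to match atoms. Throughout, write $\mathbf{x} = D\mathbf{z} + \mathbf{b}' + \mathbf{e}$ and $\mathbf{x} = D'\mathbf{z}' + \mathbf{b}'' + \mathbf{e}'$ with $\lVert \mathbf{e}\rVert,\lVert\mathbf{e}'\rVert \le \sqrt{\epsilon}$ almost surely. We first absorb the bias difference. Subtracting the two expressions for two independent samples $\mathbf{x}_1,\mathbf{x}_2$ gives
\begin{equation*}
D(\mathbf{z}_1-\mathbf{z}_2) - D'(\mathbf{z}'_1-\mathbf{z}'_2) = O(\sqrt{\epsilon}),
\end{equation*}
which removes the biases. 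Here $\mathbf{z}_1-\mathbf{z}_2$ is supported on $S_1\cup S_2$, exactly the union appearing in Definition~\ref{defn:arip}. We read $\rho$ in the statement as the aRIP level $\delta$.

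\textbf{Step 1 (subspace stability).} Fix a support $S$ of $\mathbf{f}$ and the supports of $\mathbf{f}'$ that co-occur with it. Assumption~\ref{assump:suffDiversity} supplies $k$ samples with common support $S$ whose code matrix $Z_S$ has $\sigma_{\min}\ge\zeta$. Stacking the reconstruction identities gives $D_S Z_S = D' Z' + O(\sqrt{k\epsilon})$. Hence $\operatorname{span}(D_S)$ lies within distance $O(\sqrt{k\epsilon}/\zeta)$ of the span of the $D'$ atoms used on these samples. The lower aRIP bound makes $D_S$ well conditioned, so this is a genuine statement about $k$-dimensional subspaces. Applying the same argument with the roles of the SAEs swapped, and using aRIP for $D'$ on the union of these supports to bound the number of $D'$ atoms involved, we get that each observed support $S$ of $D$ is matched to a support $S'$ of $D'$ of the same size. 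Their spans then agree up to $\eta := C(\delta + \sqrt{k\epsilon}/\zeta)$.

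\textbf{Step 2 (atoms via intersections).} This is the main obstacle. By Assumption~\ref{assump:suffRichness}, for each atom $i$ there are pairs of supports with $S\cap T=\{i\}$ occurring with positive probability. By aRIP on $S\cup T$, the spans of $D_S$ and $D_T$ intersect only nearly along $\mathbf{d}_i$: their principal angles other than the first are bounded below by $1-O(\delta)$. The matched $D'$ spans are $\eta$-close to these, so their intersection is also nearly one-dimensional. Because $D'$ also satisfies aRIP, this near-intersection direction must be a single atom $\mathbf{d}'_{\pi(i)}$ up to sign. I would make this precise with a perturbation bound on principal angles (Wedin/Davis--Kahan type), giving $\lVert \mathbf{d}_i \mp \mathbf{d}'_{\pi(i)}\rVert = O(\eta/(1-\delta))$. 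The second condition in Assumption~\ref{assump:suffRichness}, disjoint co-occurrence of $i$ and $j$, is what I expect to use to show that $\pi$ is injective. Otherwise two atoms could map to the same $D'$ atom, contradicting aRIP for $D$ on a union containing both. Since $\pi$ is an injection between sets of size $K$, it is a bijection, and we define $\Pi$ from $\pi$ and the signs. Summing over columns yields $\lVert D-D'\Pi\rVert \le \epsilon_D = O(\sqrt{K}\,\eta)$.

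\textbf{Step 3 (codes).} For a single $\mathbf{x}$ we have $D\mathbf{z} - D'\mathbf{z}' = O(\sqrt\epsilon)$ after bias control. Write $D'\mathbf{z}' = D\Pi^{\top}\mathbf{z}' + (D'\Pi - D)\Pi^{\top}\mathbf{z}'$; with the sign convention fixed consistently, $\Pi^{-1}=\Pi^{\top}$. This gives $D(\mathbf{z}-\Pi^{\top}\mathbf{z}') = O(\sqrt\epsilon + \epsilon_D\lVert\mathbf{z}'\rVert)$. The vector $\mathbf{z}-\Pi^\top\mathbf{z}'$ is supported on the union of an observed support and the matched support from Step 1, so aRIP applies. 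We therefore obtain $\lVert \mathbf{z}-\Pi^\top \mathbf{z}'\rVert \le (\sqrt\epsilon + \epsilon_D B)/\sqrt{1-\delta} =: \epsilon_z$, where $B$ bounds code norms. By the upper aRIP bound, $B \le (\sup\lVert\mathbf{x}\rVert+\sqrt\epsilon)/\sqrt{1-\delta}$, a data-distribution property. Relabeling $\Pi$ gives the stated form. Both $\epsilon_D$ and $\epsilon_z$ vanish as $\epsilon,\delta\to0$.

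The delicate point is Step 2. Turning approximate span equalities into approximate atom equalities requires intersections of nearly-equal subspaces to be stable. That needs the uniform gap provided by aRIP, and enough intersecting pairs to have positive probability. The constants depending on $\zeta$ and $k$ are where the data-distribution dependence enters.
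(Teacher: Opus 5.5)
The gap is in Step~1, not Step~2, and it cannot be closed under the stated hypotheses. Stacking the $k$ diverse samples on a support $S$ only shows that $\mathcal{U}_S=\operatorname{span}(D_S)$ lies near the span of $D'$ restricted to $T^{(1)}\cup\cdots\cup T^{(k)}$, where $T^{(\ell)}$ is the second SAE's support for sample $\ell$. To reduce this to a single matched support you invoke aRIP for $D'$ on that union. But Definition~\ref{defn:arip} only controls unions of \emph{two} observed supports, and even $\delta=0$ does not stop the $k$ samples from using different supports. In fact the statement itself is false.

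Take $N=K=4$, $k=2$, $D=I$ and $D'=\tfrac12 H$, where $H$ is the symmetric Sylvester--Hadamard matrix with columns $(1,1,1,1)$, $(1,-1,1,-1)$, $(1,1,-1,-1)$, $(1,-1,-1,1)$. Let $\mathbf{x}=c\,\mathbf{v}$, with $\mathbf{v}$ uniform over the twelve vectors $\mathbf{e}_a\pm\mathbf{e}_b$ ($a<b$) and $c$ uniform on $[\tfrac12,1]$.

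\begin{itemize}
\item $D'$ is an orthogonal involution that permutes $\{\pm\mathbf{e}_a\pm\mathbf{e}_b\}$; for instance $\mathbf{e}_1+\mathbf{e}_2=\mathbf{d}'_1+\mathbf{d}'_3$ and $\mathbf{e}_1-\mathbf{e}_2=\mathbf{d}'_2+\mathbf{d}'_4$. Hence the encoders $\mathrm{AbsTopK}_2(\mathbf{x})$ and $\mathrm{AbsTopK}_2(D'\mathbf{x})$ reconstruct exactly with $2$-sparse codes.
\item Both dictionaries are orthonormal, so aRIP holds with $\delta=0$.
\item In both SAEs every index pair is an observed support carrying the orthogonal code directions $(1,1)$ and $(1,-1)$. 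So Assumption~\ref{assump:suffRichness} holds, and Assumption~\ref{assump:suffDiversity} holds with $\zeta=1/\sqrt2$.
\item The data are bounded and the codes are least-squares, so the appendix's (A1)--(A6) hold as well.
\end{itemize}

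Yet $|\langle\mathbf{e}_i,\mathbf{d}'_j\rangle|=\tfrac12$ for all $i,j$, so every column of $D-D'\Pi$ has norm at least $1$ for every signed permutation $\Pi$. Adding noise of sufficiently small norm $\epsilon$ changes nothing, so the failure persists as $\epsilon\to0$. Your Step~1 breaks exactly here. The two diverse samples on $S=\{1,2\}$ use the second-SAE supports $\{1,3\}$ and $\{2,4\}$, whose spans are orthogonal. Moreover $\operatorname{span}(\mathbf{e}_1,\mathbf{e}_2)$ is not spanned by any two $\mathbf{d}'_j$, since $\mathbf{e}_1=\tfrac12\sum_j\mathbf{d}'_j$.

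The paper's proof breaks at the same place, so it does not supply the missing idea. Its separation lemma (Lemma~\ref{lemma:separation}: distinct observed supports have $\sin\theta_{\max}\ge\alpha(\delta)$) is correct. However, Lemma~\ref{lemma:uniqueMapping} concludes $T^{(1)}=\cdots=T^{(k)}$ by applying the tube bound $\lVert(I-P_{T^{(i)}})\hat{\mathbf{x}}^{(\ell)}\rVert\le2\epsilon$ to every $\ell$. That bound is known only when $T^{(\ell)}=T^{(i)}$, and the example satisfies the lemma's hypotheses with $\epsilon=0$ and $\beta=1$. What is really needed is a stronger diversity assumption. One option is to require $\mathbf{z}_S\mid S$ to have a density, or, when $\epsilon>0$, a quantitative bound on its mass near proper subspaces. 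In the exact case this forces $\mathcal{V}_T=\mathcal{U}_S$ for some $T$, and separation then makes that $T$ a.s.\ unique.

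Granted such a support map $\Phi$, the rest of your plan is sound and in places cleaner than the paper. Injectivity follows directly from $|\langle\mathbf{d}_i,\mathbf{d}_j\rangle|\le\delta$, and your Step~3 needs no least-squares assumption. A few points still need attention:
\begin{itemize}
\item You must check $\pi(S)=\Phi(S)$.
\item Apply aRIP for $D'$ to $\Pi\mathbf{z}-\mathbf{z}'$ rather than aRIP for $D$ to $\mathbf{z}-\Pi^{\top}\mathbf{z}'$. The support of $\Pi\mathbf{z}-\mathbf{z}'$ is $\Phi(S)\cup T_{\mathbf{x}}$, with $T_{\mathbf{x}}$ the second SAE's support at $\mathbf{x}$, which is a union of two observed supports of the second SAE. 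By contrast, $\pi^{-1}(T_{\mathbf{x}})$ need not be an observed support of the first SAE.
\item Your bias differencing is never actually used: Steps~1 and~3 rely on undifferenced identities in which $\mathbf{b}'-\mathbf{b}''$ survives. Either drop the biases, as the paper's formal statement does, or prove they agree.
\item The additive $\delta$ in your $\eta$ is unnecessary; $\delta<1$ should enter only through constants such as $\alpha(\delta)$ and $\sqrt{1-\delta}$.
\end{itemize}
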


Our theorem shows that if an SAE yields dictionaries which satisfy the aRIP condition on unions of sparse supports from independent pairs of samples, its sparse codes are nearly identifiable up to permutations, meaning the same sparse codes (and therefore the same dictionaries) have been recovered. The level of nearness is governed by the reconstruction error and the level of aRIP regularization achieved by the solutions. The proof is given in Appendix \ref{app:proofs}.

\subsection{Encoder expressiveness}
\label{sec:encoderExpressiveness}

The identifiability theory in the previous section assumes an encoder that can yield sparse codes with low reconstruction error for the given dictionary. Importantly, this means the encoder must be \textit{sufficiently expressive} to capture the mapping $\mathbf{x} \mapsto \mathbf{z}$ of observations to sparse codes. However, existing architectures for SAEs use a near-linear encoder, with the only source of non-linearity being a sparsifier. Early work on neural sparse coding \citep{lista} found such encoders to be insufficiently expressive, even when the dictionary is fixed, an observation also noted with SAEs \citep{donhausertowards}. \citet{lista} propose a multistep encoder architecture motivated by the iterative soft thresholding algorithm to resolve this. In our implementation, each step of the encoder takes the form:
\begin{equation}
    \label{eqn:iteratedEncoder}
    \mathbf{z}^{(i)} = \text{AbsTopK}_k(S \mathbf{z}^{(i - 1)} + W(\mathbf{x} - \mathbf{b}))
\end{equation}
where $\mathbf{z}^{(0)} = \mathbf{0}$, so the only additional parameter is the (learned) step size matrix $S$. We replace the near-linear encoder from equation (\ref{eqn:saeEncoder}) with $\mathbf{f}(\mathbf{x}) = \mathbf{z}^{(T)}$, where $T$ is the number of iterations. In practice, we use $T = 5$ in all experiments, although the results don't seem particularly sensitive to choices between 3 and 10.

\section{Experiments}

\begin{table}[t]
\centering
\caption{Performance and identifiability metrics on \textbf{synthetic} data.
Model variants shown are: TopK; AbsTopK (bidirectional features); iSAE (bidirectional features + aRIP regularization); and iSAE-ME (bidirectional features + aRIP regularization + multistep encoding).
MSE = mean squared reconstruction error, $\mathcal{R}_S$ = aRIP measurement, DCS = dictionary cosine similarity, IoU = intersection over union, $\ell_2$ = $\ell_2$ error.}
\label{tab:synthetic_results}

\resizebox{\columnwidth}{!}{%
\begin{tabular}{lccccccc}
\toprule
Model &
MSE &
$\mathcal{R}_S$ &
\multicolumn{5}{c}{Pairwise Identifiability} \\
\cmidrule(lr){4-8}

& & &
\multicolumn{2}{c}{SAE} &
\multicolumn{2}{c}{Oracle} &
DCS \\
\cmidrule(lr){4-5}
\cmidrule(lr){6-7}

& & &
IoU & $\ell_2$ &
IoU & $\ell_2$ &
{} \\
\midrule

\multicolumn{8}{l}{\emph{i.i.d.}} \\
TopK
    & 0.447 & 0.098
    & 0.113 & 0.882
    & 0.172 & 0.786
    & 0.606 \\

AbsTopK
    & 0.318 & 0.106
    & 0.487 & 0.557
    & 0.914 & 0.176
    & 0.935 \\

\textbf{iSAE}
    & 0.318 & 0.101
    & 0.487 & 0.558
    & 0.913 & 0.177
    & 0.933 \\

\textbf{iSAE-ME}
    & \textbf{0.001} & 0.102
    & \textbf{0.997} & \textbf{0.020}
    & 0.997 & 0.012
    & \textbf{0.999} \\

\midrule

\multicolumn{8}{l}{\emph{mixture}} \\

TopK
    & 0.357 & 0.120
    & 0.149 & 0.840
    & 0.250 & 0.758
    & 0.633 \\

AbsTopK
    & 0.303 & 0.229
    & 0.393 & 0.637
    & 0.675 & 0.427
    & 0.804 \\

\textbf{iSAE}
    & 0.319 & 0.095
    & 0.497 & 0.526
    & 0.706 & 0.389
    & 0.843 \\

\textbf{iSAE-ME}
    & \textbf{0.040} & 0.230
    & \textbf{0.873} & \textbf{0.219}
    & 0.877 & 0.215
    & \textbf{0.932} \\

\bottomrule
\end{tabular}
}
\end{table}

In our experiments, we evaluate the role of bidirectional features (section \ref{sec:bidirectionalFeatures}), dictionary conditioning (section \ref{sec:dictionaryConditioning}) and encoder expressiveness (section \ref{sec:encoderExpressiveness}) in the performance of the SAE. The main goal is to understand the impact of dictionary conditioning and design decisions on reconstruction performance and the level of empirical identifiability. To assess empirical identifiability, we fit multiple SAEs with identical hyperparameters to the same data but with different seeds for initialization and training.

\paragraph{Metrics}
We measure performance by the usual mean squared error of the reconstructions $\lVert \mathbf{x} - \mathbf{\hat{x}}\rVert^2$. For each pair of SAEs, identifiability is measured along two axes: dictionary identifiability and code identifiability. We match the concepts in the pair of learned dictionaries according to their cosine similarity via the Hungarian algorithm \citep{positionFeatureConsistencySAE}. The mean absolute dictionary cosine similarity (DCS) after matching is reported. The same matching is used to align the sparse codes $\mathbf{z}$ (taking care to align signs as well), and the intersection-over-union (IoU) in the matched sparsity patterns, along with the $\ell_2$ error (normalized by the mean norm of the codes $\mathbf{z}$) is reported.

\paragraph{Sparse coding oracle}
As we are also interested in studying the relationship between dictionary conditioning, dictionary identifiability, and sparse code identifiability, we also explore the use of an \textit{oracle solver} to obtain sparse codes from the learned dictionary. Specifically, we apply orthogonal matching pursuit (OMP; \citet{omp}) to the learned dictionary from each of the SAEs, and assess whether it recovers the same sparse codes by employing the same identifiability metrics described above (IoU and $\ell_2$ error).

\paragraph{Settings}
We consider three settings. The first two are synthetic. We generate 768-dimensional data noiselessly from a ``true'' dictionary consisting of 4096 concepts, which are unit vectors generated uniformly either at random (the i.i.d. case) or from a mixture of Gaussians projected onto the unit sphere (the mixture case). The codes are simulated as i.i.d. unit Gaussian random variables, sparsified to sparsity level $k$ by zeroing out all but the $k$ largest elements in magnitude. To assess the performance of our models in the real world, we evaluate on activations from layer 12 of Pythia-160M \citep{pythia} on The Pile \citep{pile}. These activations are 768-dimensional, and we train SAEs with a dictionary size of 4096 concepts and a sparsity level of $k = 40$, the same size as our synthetic experiments. We also assess the models on patch tokens from DINOv2-Base \citep{dinov2} using ImageNet-1k \citep{imagenet}. These tokens and the SAE have the same dimensionality (768-dimensional activations, 4096 concepts, $k = 40$).

\paragraph{Training recipe}
During the course of our experiments, we found that both the performance and stability of SAEs are highly dependent on the training recipe. We use the standard training recipe from  \citet{scalingSAEs} for all of our experiments, detailed here:
\begin{itemize}
    \item \textbf{Normalization.} The inputs and dictionary atoms (concepts) are normalized to have unit norm.
    \item \textbf{Tied intercept.} The pre- and post-bias in equation (\ref{eqn:sae}) are tied to be the same parameter, i.e. $\mathbf{b} = \mathbf{b}'$.
    \item \textbf{Initialization.} The intercept is initialized to the geometric median of the data. The encoder is initialized to the approximate left inverse of the decoder.
    \item \textbf{Auxiliary loss.} We employ an auxiliary loss to prevent dead concepts.
\end{itemize}
In all experiments, these components of the model and training procedure are exactly the same for all models compared, including the TopK SAE baseline. Furthermore, all models are always trained for the same number of training steps (512M total tokens for the LLM model in batches of 2048, 40K steps for the synthetic model with a batch size of 2048). The synthetic, LLM and vision training schemes are all ``online'', in the sense that no tokens are repeated during training. For LLM and vision training in particular, a buffer of 500K tokens is inferred, and batches are drawn at random from this buffer, which is replenished when it's half-empty.

\subsection{Bidirectional features}

\begin{table}[t]
\centering
\caption{Performance and identifiability metrics on \textbf{LLM \& vision} activations.
Model variants shown are: TopK; AbsTopK (bidirectional features); iSAE (bidirectional features + aRIP regularization); and iSAE-ME (bidirectional features + aRIP regularization + multistep encoding).
MSE = mean squared reconstruction error, $\mathcal{R}_S$ = aRIP measurement, DCS = dictionary cosine similarity, IoU = intersection over union, $\ell_2$ = $\ell_2$ error.}
\label{tab:real_results}
\resizebox{\columnwidth}{!}{%
  \begin{tabular}{lccccccc}
  \toprule
  Model &
  MSE &
  $\mathcal{R}_S$ &
  \multicolumn{5}{c}{Pairwise Identifiability} \\
  \cmidrule(lr){4-8}
  & & &
  \multicolumn{2}{c}{SAE} &
  \multicolumn{2}{c}{Oracle} &
  DCS \\
  \cmidrule(lr){4-5} \cmidrule(lr){6-7}
  & & &
  IoU & $\ell_2$ &
  IoU & $\ell_2$ &
  {} \\
  \midrule
  \multicolumn{8}{l}{\emph{Pythia-160M}} \\
  TopK & 0.166 & 0.183 & 0.398 & 0.492 & 0.350 & 0.530 & 0.813 \\
  AbsTopK & 0.177 & 0.183 & \textbf{0.631} & \textbf{0.212} & 0.398 & 0.292 & \textbf{0.873} \\
  \textbf{iSAE} & 0.179 & 0.114 & 0.526 & 0.256 & 0.472 & 0.273 & 0.858 \\
  \textbf{iSAE-ME} & \textbf{0.148} & 0.132 & 0.375 & 0.297 & 0.343 & 0.316 & 0.797 \\
  \midrule
  \multicolumn{8}{l}{\emph{DINOv2}} \\
  TopK & 0.217 & 0.163 & 0.519 & 0.442 & 0.320 & 0.576 & 0.836 \\
  AbsTopK & 0.235 & 0.159 & 0.524 & 0.424 & 0.374 & 0.462 & \textbf{0.862} \\
  \textbf{iSAE} & 0.237 & 0.130 & \textbf{0.585} & \textbf{0.372} & 0.386 & 0.449 & 0.846 \\
  \textbf{iSAE-ME} & \textbf{0.191} & 0.137 & 0.356 & 0.451 & 0.305 & 0.494 & 0.807 \\
  \bottomrule
  \end{tabular}

}
\end{table}

As described in section \ref{sec:bidirectionalFeatures}, we implement the AbsTopK activation function as an alternative to the more standard TopK activation function. This allows negative concept loadings in the sparse codes, and effectively doubles the capacity of the dictionary at the same number of parameters.

\paragraph{Results: synthetic} In both the i.i.d. and mixture synthetic cases (Table \ref{tab:synthetic_results}), transitioning $\text{TopK} \rightarrow \text{AbsTopK}$ substantially improves reconstruction error, by around 30-40\%. With this improvement in reconstruction comes a substantial improvement in the identifiability of the dictionary, from essentially unidentified (a DCS of 0.6 corresponds to an average angle of over 45 degrees between concept loadings, which is significant in 768-dimensional space) to well but not perfectly identified. Importantly, these gains in dictionary identifiability \textit{can} be realized by a good sparse coding algorithm: the OMP oracle achieves excellent identifiability using this dictionary, even in the mixture case which is clearly more challenging due to the coherence in the data-generating dictionary. However, the usual encoder seemingly cannot realize these gains, because the SAE has much worse code identifiability than the oracle.

\paragraph{Results: LLM \& vision} The results on Pythia-160M and DINOv2 activations are similar to one another, but different from the results in synthetic data. AbsTopK makes for worse reconstruction error than TopK in both models, a result which is not consistent with prior work on AbsTopK, reflecting potential differences in the training procedure \citep{absTopK}. However, in both settings AbsTopK models do have the highest dictionary stability and correspondingly the oracle identifiability metrics are improved relative to TopK. Unlike in the synthetic regime, however, the SAE is better able to exploit dictionary stability for improved code stability, with higher IoU and better $\ell_2$ error than the oracle. This suggests that dictionary stability as measured by DCS is at least partly orthogonal to dictionary conditioning, because the oracle should always be able to exploit a (globally) well-conditioned dictionary at least as well as the SAE.

\subsection{Dictionary conditioning}
Next, we explore the role of dictionary conditioning, by considering the addition of the regularization term (\ref{eqn:aripReg}) to the model ($\text{AbsTopK} \rightarrow \text{iSAE}$). For all experiments using the regularization term (including in the subsequent section), we fix the weight of the term to $10^{-1}$.

\paragraph{Results: synthetic}
In synthetic data, dictionaries learned using the standard training recipe with the AbsTopK activation function as in the previous section satisfy the aRIP condition fairly well (Table \ref{tab:synthetic_results}). Furthermore, the dictionaries are stable, and the corresponding oracle identifiability is good, particularly in the i.i.d. case. In the mixture case, the addition of aRIP regularization is effective in improving the estimated aRIP constant of the dictionary, and confers a marginal improvement in the code identifiability of the oracle but not the SAE.

\paragraph{Results: LLM \& vision}
When trained on LLM activations, the results are much clearer. Identifiability of the oracle is improved substantially, with substantial gains in code IoU and improvements in $\ell_2$ error. This occurs despite the slight drop in dictionary stability in both settings, suggesting that dictionary conditioning is the mechanism by which this improved stability occurs. However, these gains are only realized by the linear encoder in DINOv2, and the identifiability gains of the SAE are marginal.

\subsection{Encoder expressiveness}

\begin{table}[t]
\centering
\caption{Downstream performance of the resulting SAEs trained on activations from Pythia-160M, as evaluated by SAEBench. CE Loss measures how well the SAE reproduces activations such that the LLM's loss is not excessively inflated (lower is better). Sparse Probing accuracy measures how well the SAE recovers pre-specified concepts (higher better). Spurious correlation removal (SCR) tests whether spurious correlations can be removed from a downstream supervised probe by zeroing the confounding latent. Targeted probe perturbation (TPP) assesses selectivity in the concepts by zeroing a latent and assessing the impact on other supervised probes.}
\label{tab:saebench_results}
\resizebox{\columnwidth}{!}{
  \begin{tabular}{lcccc}
  \toprule
  Model & CE Loss & Sparse Probing & SCR & TPP \\
  \midrule
  TopK & 3.974 & \textbf{0.909} & \textbf{0.381} & 0.043 \\
  AbsTopK & 4.004 & 0.906 & 0.330 & 0.031 \\
  \textbf{iSAE} & 4.003 & 0.907 & 0.348 & 0.043 \\
  \textbf{iSAE-ME} & \textbf{3.906} & 0.908 & 0.293 & \textbf{0.044} \\
  \bottomrule
  \end{tabular}}
\end{table}

Finally, we explore the multi-step encoding scheme proposed by \citet{lista} described in section \ref{sec:encoderExpressiveness} ($\text{iSAE} \rightarrow \text{iSAE-ME}$). This introduces a step size parameter $S$ as shown in equation (\ref{eqn:iteratedEncoder}), but otherwise does not modify the model.

\paragraph{Results: synthetic} In the synthetic regime, adding the multistep encoder is crucial to realizing the identifiability gains from the improved dictionary conditioning in the SAE. In particular, even when bidirectional features and dictionary conditioning (previous sections) are enough to improve the quality of the dictionary and therefore the oracle solver, the default near-linear encoder cannot properly amortize codes near the true solution. On the other hand, iSAE-ME learns a nearly perfectly stable model in the synthetic regime as a result of its improved expressivity.

\paragraph{Results: LLM \& vision}
On the other hand, in Pythia-160M and DINOv2 activations, the expressive encoder actually worsens identifiability of both the codes and the dictionary. This is in spite of the fact that it attains the \textbf{lowest reconstruction error on LLM activations we report in this paper}, with a substantial improvement over all models including baselines. This combination of facts suggests that the primary remaining barrier to identifiability for iSAE-ME is one of optimization, and is not a fundamental limitation of the expressivity of the encoder architecture.

\subsection{SAEBench: Pythia-160M}

When SAEs are used to interact with LLM activations, reconstruction performance and statistical identifiability are only proxies for desirable behaviours such as correctly identifying human-interpretable concepts \citep{SAEbench}. We evaluate the two forms of identifiable SAEs (iSAE and iSAE-ME) and compare them to TopK and AbsTopK baselines in terms of how well they minimize impact on the LLM loss, sparse probing performance, and targeted ablation performance (Table \ref{tab:saebench_results}). iSAE-ME has the best performance in terms of the cross-entropy loss, consistent with its superior reconstruction performance (Table \ref{tab:real_results}). On the other hand, the TopK baseline has the best sparse probing performance, although all models perform reasonably well and the difference appears marginal. Spurious correlation removal (SCR) ablates concepts by setting them to zero, which perhaps biases this task in favour of TopK models where zero has a clear interpretation as ``absence'' of a concept. Indeed, the TopK baseline performs well on this task, although iSAE-ME outperforms on target probe perturbation (TPP), which checks whether this ablation hampers the performance of other supervised probes. iSAE consistently outperforms AbsTopK, despite its marginally worse reconstruction performance and identifiability in Pythia-160M, suggesting adherence to the aRIP condition is perhaps related to downstream task performance.

\section{Discussion}
In this paper, we have defined a theoretical notion of statistical identifiability that applies to sparse autoencoders. Specifically, a sparse autoencoder is near-identifiable if its training procedure is ``stable'': trained on the same distribution, it should yield (nearly) the same concept dictionary and amortized sparse codes, every time. Further, we have assessed whether the commonly used TopK model is identifiable according to this definition, finding that it is not, and showing that improvements to the model in the form of architecture and regularization can greatly improve the near-identifiability and performance of the model. As a result, we present two variants of an SAE that are more identifiable on synthetic and some real-world activations, including one variant which achieves a massive reduction in reconstruction error due to the improved expressivity of its encoder.

SAEs are increasingly being used to interpret and interact with the large-scale neural networks used in practice, such as large language models. In this setting, identifiability of the sparse autoencoder is a ``bare minimum'' requirement for certain applications. For example, mechanistic interpretability (MI) aims to uncover the ``true'' hidden workings of the model, assuming that there is a ``true mechanism''. So, if two trained SAEs uncover two different candidate mechanisms in the form of concepts, they surely cannot both be correct. Furthermore, many approaches for model oversight \citep{oversightApplication} and model steering \citep{steeringApplication} rely on concept identification, rendering SAE identifiability paramount \citep{cywinski}.

Our experiments highlight that quantifying identifiability empirically in this setting is not trivial. In particular, we are the first to directly measure the stability of the sparse codes, proposing to do so using the intersection-over-union metrics on the sparsity patterns and the $\ell_2$ error after aligning using the dictionary. We show that dictionary identifiability, as measured by cosine similarity \citep{positionFeatureConsistencySAE}, is not sufficient to characterize identifiability of the sparse codes. We attribute this to the conditioning of the dictionary: even if the dictionary can be learned stably, this does not render the corresponding sparse coding problem identifiable, as made clear in our impossibility result Theorem \ref{thm:impossibility}.

Our theory then aims to measure and optimize for dictionary conditioning, in the hopes of improving the identifiability of the sparse codes. To do this, we relax the restricted isometry property (RIP) condition from sparse coding and dictionary learning \citep{candesTao} to an approximate form, rendering the condition measurable. Our identifiability result Theorem \ref{thm:identifiability} leverages this approximate RIP condition to prove near-identifiability of the dictionary and sparse codes. We emphasize that this of course does not break the NP-hardness of checking the RIP condition on the dictionary, and therefore we cannot claim that SAEs satisfying our aRIP condition correctly recover a particular ground-truth data-generating process, nor that their identifiability properties will generalize outside of the distribution they're trained on.

Experiments show marked benefit to improving the dictionary conditioning. We employed orthogonal matching pursuit (OMP) as an oracle solver just to show that sometimes, the amortized encoder is the bottleneck to learning good sparse codes, even when the dictionary is well-conditioned and learned fairly stably. In the synthetic regime, using a more expressive encoder allows the amortized encoder of the SAE to ``catch up'' to the oracle. On the other hand, in real-world activations, the oracle is uniformly weaker than the SAE encoder, suggesting that optimization dynamics when training SAEs on real activations are substantially more complex. Notably, the more expressive encoder does allow for massive gains in reconstruction performance, and so we find it interesting to report here.

Importantly, all the modifications that constitute our model scale extremely well. We successfully trained both iSAE and iSAE-ME to learn concept dictionaries of size 4K (4096) on real-world LLM activations (dimension 768). Training times are reasonable: the proposed aRIP regularization term adds negligible overhead, while the multistep encoder increases training times by about 10-15\% over a standard TopK SAE. It takes about 5 hours to train iSAE-ME on 512M activations from Pythia-160M, including inference time of the LLM, compared to about 4 hours for the TopK baseline.

\paragraph{Limitations}
Although we are not the first to propose evaluating SAEs and their stability in the synthetic regime \citep{positionFeatureConsistencySAE}, our experiments highlight that there remain gaps between synthetic data-generating processes and real-world activations from practical neural networks. It is of interest to develop more realistic synthetic data-generating processes that facilitate faster model development and the study of failure modes of these models. Furthermore, we experimented only with TopK SAEs. Given that the multistep encoding scheme we adapted from LISTA \citep{lista} is built for ReLU-style activation functions, it would be interesting to see how it impacts identifiability and performance in ReLU SAEs as well. This is of particular interest given the conflicting results of the performance of the AbsTopK activation function we present in Section \ref{sec:bidirectionalFeatures}.

Finally, although our experiments show a massive improvement in code stability, it is still far from perfect in the real settings we consider here. We hypothesize that the remaining gap is largely due to optimization, although we emphasize that despite the large size of the models we train in this paper, it is also possible that they lack the capacity to adequately represent the observation distribution. As noted in \citet{positionFeatureConsistencySAE}, this can be a barrier to identifiability, and therefore a larger-scale study of identifiability across model sizes might be warranted.

\section*{Acknowledgements}

This work was supported by the Chan Zuckerberg Initiative (CZI) through the AI Residency Program.
We thank CZI for the opportunity to participate in this program and the CZI AI Infrastructure Team
for support with the GPU cluster used to train our models.

\section*{Impact Statement}

The primary application of our improvements to sparse autoencoders is to render highly ``black-box'' models more interpretable, generally regarded as a good property for responsible use of artificial intelligence. However, interpretability is remarkably difficult to evaluate in practice, and even when achieved does not automatically lead to more responsible use. Accordingly, care must be taken to ensure that users of SAEs do not overstate their reliability or epistemic capability.


\bibliography{example_paper}
\bibliographystyle{icml2026}

\newpage
\appendix
\onecolumn
\section{Proofs}
\label{app:proofs}

\paragraph{Notation}
We define an SAE as an encoder-decoder pair $(f, D)$ where $D \in \mathbb{R}^{N \times K}$ is a dictionary with unit-norm columns. For subspaces $\mathcal{U}, \mathcal{V} \subset \mathbb{R}^N$, we denote the minimum and maximum principal angles between them $\theta_\text{min}(\mathcal{U}, \mathcal{V})$ and $\theta_\text{max}(\mathcal{U}, \mathcal{V})$ respectively. For the sparse support of a code $\mathbf{z}$, $S \subset [n]$, we denote the local RIP constant $\delta_S = \lVert D_S^T D_S - I\rVert_\text{op}$, also a random variable.

We begin by proving our impossibility result, Theorem \ref{thm:impossibility}. The key idea is that there exist dictionaries which conceivably could approximate a particular distribution $P(\mathbf{x})$ well, but do not uniquely identify the corresponding sparse codes. We emphasize that this is a well-known result in sparse coding which in fact motivates much of that literature \cite{candesTao}. For ease of reading, the formal statement of the theorem wordlessly converts the ``every concept is activated with positive probability'' assumption to an assumption that two explicitly constructed problematic concepts are activated with positive probability.

\begin{reptheorem}[\ref{thm:impossibility}, formal]
\label{thm:impossibilityAppx}
Fix integers $N,K$ and a sparsity level $k$ such that $2 \le k \le \min\{N, K-2\}$. There exists a normalized dictionary $D \in \mathbb{R}^{N \times K}$ and two supports $S, S' \subset [K]$ with $|S| = |S'| = k$ and $|S \cap S'| = k - 2$ such that the following holds.

Let $\mathcal{X}$ be the support of an observation distribution $P(\mathbf{x})$. Let $f:\mathcal{X}\to\mathbb{R}^K$ be a continuous encoder such that $f(\mathbf{x})$ has at most $k$ nonzeros for every $\mathbf{x}\in\mathcal{X}$ and
\[
\lVert D f(\mathbf{x}) - \mathbf{x}\rVert \le \epsilon
\]
for every $\mathbf{x}\in\mathcal{X}$. Then there exists another sparse autoencoder $(f', D)$ with a continuous encoder $f':\mathcal{X}\to\mathbb{R}^K$ such that $f'(\mathbf{x})$ has at most $k$ nonzeros for every $\mathbf{x}\in\mathcal{X}$ and
\[
\lVert D f'(\mathbf{x}) - \mathbf{x}\rVert \le \epsilon
\]
for every $\mathbf{x}\in\mathcal{X}$. Further, if there exists $\mathbf{x}_0 \in \mathcal{X}$ such that $D f(\mathbf{x}_0)$ has a nonzero component along $\mathrm{span}\{e_{k-1}, e_k\}$, then there exists $\mathbf{x}_1 \in \mathcal{X}$ such that $f(\mathbf{x}_1) \neq f'(\mathbf{x}_1)$ and the sparse supports of $f(\mathbf{x}_1)$ and $f'(\mathbf{x}_1)$ are different.
\end{reptheorem}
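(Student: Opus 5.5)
We construct a dictionary with an exact linear dependency among the atoms in $S \cup S'$, so that any $k$-sparse code on $S$ can be traded for one on $S'$ with the same reconstruction. We then use this trade to build $f'$ from $f$ continuously.

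\textbf{Construction of $D$.} Use the standard basis $e_1,\dots,e_N$ of $\mathbb{R}^N$ (so $k \le N$ is needed). Set $\mathbf{d}_i = e_i$ for $i = 1,\dots,k$. Add two extra atoms
\[
\mathbf{d}_{k+1} = \tfrac{1}{\sqrt{2}}(e_{k-1}+e_k), \qquad \mathbf{d}_{k+2} = \tfrac{1}{\sqrt{2}}(e_{k-1}-e_k),
\]
which are available since $K \ge k+2$. Fill the remaining columns with arbitrary unit vectors. Take
\[
S = \{1,\dots,k\}, \qquad S' = \{1,\dots,k-2,\,k+1,\,k+2\}.
\]
Then $|S \cap S'| = k-2$, and $\mathrm{span}\, D_S \supseteq \mathrm{span}\{e_{k-1},e_k\} = \mathrm{span}\{\mathbf{d}_{k+1},\mathbf{d}_{k+2}\}$. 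The explicit relations are
\[
e_{k-1} = \tfrac{1}{\sqrt{2}}(\mathbf{d}_{k+1}+\mathbf{d}_{k+2}), \qquad e_k = \tfrac{1}{\sqrt{2}}(\mathbf{d}_{k+1}-\mathbf{d}_{k+2}).
\]
Define the linear map $R:\mathbb{R}^K\to\mathbb{R}^K$ that moves mass off coordinates $k-1,k$: it sets $z_{k-1}$ and $z_k$ to zero and adds $(z_{k-1}+z_k)/\sqrt{2}$ to coordinate $k+1$ and $(z_{k-1}-z_k)/\sqrt{2}$ to coordinate $k+2$. By construction $DR\mathbf{z} = D\mathbf{z}$ for every $\mathbf{z}$.

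\textbf{Defining $f'$.} The naive choice $f' = R\circ f$ is continuous, since $R$ is linear, and satisfies $Df' = Df$, so the error bound $\epsilon$ is preserved. Its drawback is sparsity: if $f(\mathbf{x})$ uses one of $k-1,k$ but also already uses $k+1$ or $k+2$, the count can rise. Write $n(\mathbf{z})$ for the number of nonzeros. When $f(\mathbf{x})$ touches exactly one of $\{k-1,k\}$, $R$ replaces that one atom with two. Sparsity can therefore grow by one, breaking the $k$-sparse requirement when $n(f(\mathbf{x}))=k$.

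I would handle this in two ways:
\begin{enumerate}[label=(\roman*)]
\item Apply $R$ only through a continuous cutoff that vanishes unless both coordinates $k-1,k$ are active. The candidate is $f' = f + \phi(\mathbf{x})\,(R-I)f(\mathbf{x})$, where $\phi$ is a continuous bump. This is still continuous.
\item Alternatively, rotate by a small angle $t\,\psi(\mathbf{x})$ within the two-dimensional coordinate block $\{k-1,k\}$ and correspondingly mix $(k+1,k+2)$.
\end{enumerate}
The simplest fix is to choose $f'$ as $R\circ f$ on the region where $R$ does not increase support size. Equivalently, use an equal-reconstruction reparametrization that is exact on a neighborhood.

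\textbf{Main obstacle.} The hardest step is keeping $f'$ simultaneously continuous, $k$-sparse everywhere, and exact in reconstruction. The interpolation $f + \phi(R-I)f$ transiently activates up to four coordinates in the block $\{k-1,k,k+1,k+2\}$. My first attempt would be a cleaner dictionary: make $\mathbf{d}_{k+1}$ and $\mathbf{d}_{k+2}$ duplicate directions, for instance $\mathbf{d}_{k+1} = e_{k-1}$ and $\mathbf{d}_{k+2} = e_k$, which the theorem allows since it demands only normalization. Then $R$ is a pure coordinate relabeling: swap coordinates $k-1\leftrightarrow k+1$ and $k\leftrightarrow k+2$. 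This preserves sparsity exactly, is linear and hence continuous, and gives $DRz = Dz$.

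\textbf{Difference with positive probability.} If $Df(\mathbf{x}_0)$ has a nonzero component along $\mathrm{span}\{e_{k-1},e_k\}$, take $\mathbf{x}_1 = \mathbf{x}_0$.
\begin{itemize}
\item Suppose $f(\mathbf{x}_0)$ is supported in $S$. Then coordinate $k-1$ or $k$ is nonzero, and after the swap it is zero. Hence $f'(\mathbf{x}_1) \ne f(\mathbf{x}_1)$ and the supports differ.
\item Otherwise the mass may sit on $k+1,k+2$, and the swap moves it back. The supports still differ, because the swap changes the support unless coordinates $k-1,k,k+1,k+2$ are paired symmetrically. In that residual case one swaps only the pair carrying the mass.
\end{itemize}
For the informal Theorem~\ref{thm:impossibility}, continuity of $f$ and $f'$ extends the difference to a neighborhood of $\mathbf{x}_1$. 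Combined with the assumption that these concepts are activated with positive probability, this gives a difference on a set of positive measure.
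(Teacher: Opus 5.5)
Your duplicate-atom swap gives a valid $f'$ for the first part, although $f'=f$ already does, so all the content of the statement lies in the ``Further'' clause. Your argument for that clause has a genuine gap in the ``otherwise'' case, which fails in two ways.

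\textbf{First failure: mass on the remaining atoms.} You leave columns $k+3,\dots,K$ as arbitrary unit vectors. Then the component of $Df(\mathbf{x}_0)$ along $\mathrm{span}\{e_{k-1},e_k\}$ need not come from atoms $k-1,\dots,k+2$ at all. For example, with $K\ge k+3$ take $\mathbf{d}_{k+3}=e_{k-1}$, let $P$ be uniform on $\mathcal{X}=\{te_{k-1}: t\in[0,1]\}$, and let $f(te_{k-1})$ be the code whose only nonzero entry is $z_{k+3}=t$. This $f$ is continuous, $1$-sparse and exact, and it satisfies the hypothesis. Yet your swap gives $Rf=f$ everywhere.

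\textbf{Second failure: symmetric supports.} Even when the mass sits on $\{k-1,\dots,k+2\}$, the swap preserves every support that is symmetric under $k-1\leftrightarrow k+1$, $k\leftrightarrow k+2$. For instance, $f(te_{k-1})$ with $z_{k-1}=z_{k+1}=t/2$ and all other entries zero is exact and $2$-sparse, and it is fixed by $R$, both as a vector and as a support. Your fallback of ``swapping only the pair carrying the mass'' changes nothing here, because both members of the pair are active. Since the conclusion must hold for every admissible $f$, these cases cannot be set aside.

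\textbf{The missing idea.} The paper places \emph{all} columns in $\mathcal{U}=\mathrm{span}\{e_1,\dots,e_k\}$. Then $Df(\mathbf{x})\in\mathcal{U}=\mathrm{span}(D_S)=\mathrm{span}(D_{S'})$ for every $\mathbf{x}$. Instead of editing $f$'s code, it \emph{re-encodes} the reconstruction. Define $f_S(\mathbf{x})$ and $f_{S'}(\mathbf{x})$ by applying the linear inverses of $D_S$ and $D_{S'}$ on $\mathcal{U}$ to $Df(\mathbf{x})$ and padding with zeros. These maps have three properties:
\begin{enumerate}
\item They are continuous.
\item They are supported on the fixed $k$-sets $S$ and $S'$, so the sparsity problem that pushed you toward duplicate atoms never arises.
\item They reproduce $Df$ exactly.
\end{enumerate}
At $\mathbf{x}_0$, $f_S(\mathbf{x}_0)$ is nonzero somewhere in $\{k-1,k\}$ and vanishes on $\{k+1,k+2\}$, while $f_{S'}(\mathbf{x}_0)$ does the reverse. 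So their supports differ, and $f(\mathbf{x}_0)$ must differ in support from at least one of them. The paper takes $f'=f_S$ if $f\ne f_S$ somewhere; injectivity of $D_S$ then forces the supports to differ at such a point. Otherwise it takes $f'=f_{S'}$ and $\mathbf{x}_1=\mathbf{x}_0$.

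This works verbatim with your original atoms $(e_{k-1}\pm e_k)/\sqrt{2}$. It also avoids a conceptual weakness of the exact-duplicate route. There $f'=\Pi f$ for a permutation $\Pi$ with $D\Pi=D$, which is precisely the symmetry the paper's identifiability definition quotients out. Even a repaired swap argument would therefore only exhibit trivial non-identifiability.
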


\begin{proof}
Fix $\varepsilon>0$. Let $e_1,\dots,e_N$ denote the standard basis.

Define the dictionary $D$ by specifying its columns. For $1 \le j \le k$ set $D_j = e_j$. Define two additional unit vectors in the plane spanned by $e_{k-1}$ and $e_k$:
\[
D_{k+1} = \frac{e_{k-1} + \varepsilon e_k}{\sqrt{1+\varepsilon^2}}
\]
and
\[
D_{k+2} = \frac{\varepsilon e_{k-1} + e_k}{\sqrt{1+\varepsilon^2}}.
\]
For $j \in \{k+3,\dots,K\}$ choose any unit vectors $D_j$ in $\mathrm{span}\{e_1,\dots,e_k\}$. Every column has unit norm, so $D$ is normalized.

Define the two supports
\[
S = \{1,2,\dots,k\}
\]
and
\[
S' = \{1,2,\dots,k-2,k+1,k+2\}.
\]
Then $|S|=|S'|=k$ and $|S \cap S'| = k-2$.

Denote the $k$-dimensional subspace by $\mathcal{U} = \mathrm{span}\{e_1,\dots,e_k\}$. Since every column of $D$ lies in $\mathcal{U}$, we have $D\mathbf{z} \in \mathcal{U}$ for every $\mathbf{z}\in\mathbb{R}^K$.

\textbf{Claim.} There exist continuous maps $g_S:\mathcal{U}\to\mathbb{R}^K$ and $g_{S'}:\mathcal{U}\to\mathbb{R}^K$ such that for every $\mathbf{u}\in\mathcal{U}$, the vector $g_S(\mathbf{u})$ is supported on $S$, the vector $g_{S'}(\mathbf{u})$ is supported on $S'$, and
\[
D g_S(\mathbf{u}) = \mathbf{u}
\]
and
\[
D g_{S'}(\mathbf{u}) = \mathbf{u}.
\]

\textbf{Proof of claim.} Write any $\mathbf{u}\in\mathcal{U}$ uniquely as $\mathbf{u} = \sum_{j=1}^k a_j e_j$. Define $g_S(\mathbf{u}) \in \mathbb{R}^K$ by setting $(g_S(\mathbf{u}))_j = a_j$ for $j \in S$ and $(g_S(\mathbf{u}))_j = 0$ for $j \notin S$. Then $g_S(\mathbf{u})$ is supported on $S$ and $D g_S(\mathbf{u}) = \sum_{j=1}^k a_j D_j = \sum_{j=1}^k a_j e_j = \mathbf{u}$. Continuity is immediate.

Next, define $g_{S'}(\mathbf{u}) \in \mathbb{R}^K$ as follows. For $j \in \{1,\dots,k-2\}$ set $(g_{S'}(\mathbf{u}))_j = a_j$. For $j \notin S'$ set $(g_{S'}(\mathbf{u}))_j = 0$. It remains to set the two coordinates $(g_{S'}(\mathbf{u}))_{k+1}$ and $(g_{S'}(\mathbf{u}))_{k+2}$.

Let $\mathbf{b} = (a_{k-1}, a_k)^T \in \mathbb{R}^2$. Let $M \in \mathbb{R}^{2 \times 2}$ denote the matrix whose columns are the coordinates of $D_{k+1}$ and $D_{k+2}$ in the basis $(e_{k-1}, e_k)$. Then
\[
M = \frac{1}{\sqrt{1+\varepsilon^2}}
\begin{bmatrix}
1 & \varepsilon \\
\varepsilon & 1
\end{bmatrix}.
\]
This matrix is invertible since its determinant equals $(1-\varepsilon^2)/(1+\varepsilon^2)$, which is nonzero for $\varepsilon \ne 1$. Define
\[
\begin{bmatrix}
(g_{S'}(\mathbf{u}))_{k+1} \\
(g_{S'}(\mathbf{u}))_{k+2}
\end{bmatrix}
= M^{-1}\mathbf{b}.
\]
Then by construction we have
\[
(g_{S'}(\mathbf{u}))_{k+1} D_{k+1} + (g_{S'}(\mathbf{u}))_{k+2} D_{k+2} = a_{k-1} e_{k-1} + a_k e_k.
\]
Therefore $D g_{S'}(\mathbf{u}) = \mathbf{u}$. Since $g_{S'}$ is linear in $\mathbf{u}$, it is continuous. This proves the claim.

Now return to the assumed encoder $f$. Define the reconstruction map
\[
\mathbf{u}(\mathbf{x}) = D f(\mathbf{x}).
\]
By the remark above, $\mathbf{u}(\mathbf{x}) \in \mathcal{U}$ for all $\mathbf{x}\in\mathcal{X}$. Define two candidate encoders on $\mathcal{X}$ by
\[
f_S(\mathbf{x}) = g_S(\mathbf{u}(\mathbf{x}))
\]
and
\[
f_{S'}(\mathbf{x}) = g_{S'}(\mathbf{u}(\mathbf{x})).
\]
Each map is continuous as a composition of continuous maps. Also, $f_S(\mathbf{x})$ is supported on $S$ and $f_{S'}(\mathbf{x})$ is supported on $S'$ for every $\mathbf{x}\in\mathcal{X}$, so both satisfy the $k$-sparsity constraint.

Further, for every $\mathbf{x}\in\mathcal{X}$ we have
\[
D f_S(\mathbf{x}) = D g_S(\mathbf{u}(\mathbf{x})) = \mathbf{u}(\mathbf{x}) = D f(\mathbf{x})
\]
and similarly
\[
D f_{S'}(\mathbf{x}) = D g_{S'}(\mathbf{u}(\mathbf{x})) = \mathbf{u}(\mathbf{x}) = D f(\mathbf{x}).
\]
Therefore both satisfy the same reconstruction error bound as $f$. For example,
\[
\lVert D f_S(\mathbf{x}) - \mathbf{x}\rVert = \lVert D f(\mathbf{x}) - \mathbf{x}\rVert \le \epsilon,
\]
and the same holds for $f_{S'}$. Set $f'$ to be either $f_S$ or $f_{S'}$, which proves the first part of the lemma.

It remains to show the disagreement and different supports under the additional assumption. Suppose there exists $\mathbf{x}_0 \in \mathcal{X}$ such that $\mathbf{u}(\mathbf{x}_0)$ has a nonzero component along $\mathrm{span}\{e_{k-1}, e_k\}$. Writing $\mathbf{u}(\mathbf{x}_0) = \sum_{j=1}^k a_j e_j$, this means $(a_{k-1}, a_k) \ne (0,0)$. Then $g_S(\mathbf{u}(\mathbf{x}_0))$ has a nonzero entry on at least one of the indices $k-1$ or $k$, while $g_{S'}(\mathbf{u}(\mathbf{x}_0))$ has a nonzero entry on at least one of the indices $k+1$ or $k+2$. In particular, $g_S(\mathbf{u}(\mathbf{x}_0)) \ne g_{S'}(\mathbf{u}(\mathbf{x}_0))$ and their supports are different.

If $f$ differs from $f_S$ at some point, choose $f' = f_S$ and set $\mathbf{x}_1$ to be any point where they differ. Otherwise $f$ equals $f_S$ everywhere, and then choosing $f' = f_{S'}$ and $\mathbf{x}_1 = \mathbf{x}_0$ gives $f(\mathbf{x}_1) \ne f'(\mathbf{x}_1)$ with different supports. This completes the proof.
\end{proof}

Now, we prove our identifiability result. We enumerate all assumptions used in our construction.

\paragraph{Assumptions}
Below, we enumerate all assumptions used in the lemmas and proofs. We assume both SAEs in the pair follow these assumptions.
\begin{enumerate}[label=(A\theenumi),leftmargin=*]
    \item \textbf{Approximate RIP.} For observed supports $S$ and $S'$, $\delta_{S \cup S'} \leq \delta$ for some $\delta < 1$. Intuitively, this means that RIP holds on the observed union of supports for any two inputs.
    \item \textbf{Sufficient richness of supports.} For any $i \in [K]$, we observe supports $S$ and $S'$ such that $S \cap S' = \{i\}$. Furthermore, for any pair of concepts $i$ and $j$, there exist disjoint supports $S$ and $S'$ such that $i \in S$ and $j \in S'$. Intuitively, this means that no two concepts always co-occur.
    \item \textbf{Bounded reconstruction error.} Observations can be decomposed as $\mathbf{x} = D\mathbf{z} + \boldsymbol{\epsilon} = D\mathbf{z}' + \boldsymbol{\epsilon}'$, and $\lVert \boldsymbol{\epsilon} \rVert, \lVert \boldsymbol{\epsilon}' \rVert \leq \epsilon < \frac{\zeta \sqrt{1-\delta}\,\alpha(\delta)}{4\sqrt{k}}$. Intuitively, this means the reconstruction error of each SAE is strictly bounded everywhere by a constant which depends on how well the aRIP constraint holds, how well the sufficient diversity constraint holds, and the sparsity level.
    \item \textbf{Sufficient diversity of observations.} For any observed support $S$ of size $k$, we have $\text{Cov}[\mathbf{Z}_S \mid S]$ is positive definite, and furthermore there exists a batch of $k$ samples from $\mathbf{Z}_S \mid S$ such that the $k \times k$ matrix of samples has smallest singular value larger than $\zeta$.
    \item \textbf{Bounded data distribution.} The observation distribution $\mathbf{X}$ has bounded support, that is, there exists $B > 0$ such that $\lVert \mathbf{X} \rVert \leq B$ everywhere.
    \item \textbf{Least-squares optimality.} For each observation $\mathbf{x} = D \mathbf{z} + \boldsymbol{\epsilon}$, we assume the codes $\mathbf{z}$ are least-squares optimal on the support set $S$.
\end{enumerate}

Our first lemma shows that when $2k$-aRIP holds on any observed combination of supports, this forcibly separates the subspaces spanned by those supports. In the language of Figure \ref{fig:fig1}, the span of any two distinct patches must be decomposable into its intersection and two approximately ``unique'' subspaces which are well-separated. On the other hand, if these two patches share no concepts, the patches themselves must be well-separated.

\begin{lemma}
    \label{lemma:separation}
    Let $S$ and $S'$ be sparse supports of size $k$ such that $|S \cap S'| < k$. Suppose $\delta_{S \cup S'} \leq \delta < 1$. Then, there exists $\alpha(\delta) > 0$ such that $\sin \theta_\text{max}(\mathcal{U}_S, \mathcal{U}_{S'}) \geq \alpha(\delta)$ where $\mathcal{U}_S = \text{span}(D_S)$ and $\mathcal{U}_{S'} = \text{span}(D_{S'})$. Further, if $|S \cap S'| = 0$, we have $\sin \theta_\text{min}(\mathcal{U}_S, \mathcal{U}_{S'}) \geq \alpha(\delta)$.
\end{lemma}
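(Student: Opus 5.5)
The plan is to extract everything from the restricted isometry of the union subdictionary $D_{S\cup S'}$, using $\delta_{S\cup S'}\le\delta<1$ to control every vector of the form $D_{S\cup S'}\mathbf{w}$. The key consequence of the hypothesis is that for any $\mathbf{w}$ supported on $S\cup S'$,
\[
(1-\delta)\lVert\mathbf{w}\rVert^2\le\lVert D\mathbf{w}\rVert^2\le(1+\delta)\lVert\mathbf{w}\rVert^2 .
\]
We begin with the disjoint case, which is cleaner, and then reduce the general case to it by splitting off the common part.

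\textbf{Disjoint case.} Suppose $|S\cap S'|=0$. Take unit vectors $\mathbf{u}=D_S\mathbf{a}\in\mathcal{U}_S$ and $\mathbf{v}=D_{S'}\mathbf{b}\in\mathcal{U}_{S'}$. Since $S$ and $S'$ are disjoint, the concatenated coefficient vector $(\mathbf{a},-\mathbf{b})$ is supported on $S\cup S'$. Applying the lower RIP bound to it gives
\[
\lVert\mathbf{u}-\mathbf{v}\rVert^2\ge(1-\delta)\bigl(\lVert\mathbf{a}\rVert^2+\lVert\mathbf{b}\rVert^2\bigr).
\]
Applying the upper bound to each piece separately gives $\lVert\mathbf{a}\rVert^2,\lVert\mathbf{b}\rVert^2\ge 1/(1+\delta)$. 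Hence $\lVert\mathbf{u}-\mathbf{v}\rVert^2\ge 2(1-\delta)/(1+\delta)$.

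To convert this into an angle bound, we minimise over $\mathbf{v}$ with a free scale: replace $\mathbf{v}$ by $t\mathbf{v}$, so that $\min_t\lVert\mathbf{u}-t\mathbf{v}\rVert=\sin\angle(\mathbf{u},\mathbf{v})$. The same computation with coefficients $(\mathbf{a},-t\mathbf{b})$ gives
\[
\lVert\mathbf{u}-t\mathbf{v}\rVert^2\ge(1-\delta)\lVert\mathbf{a}\rVert^2\ge\frac{1-\delta}{1+\delta}.
\]
Therefore $\sin\theta_{\min}(\mathcal{U}_S,\mathcal{U}_{S'})\ge\sqrt{(1-\delta)/(1+\delta)}=:\alpha(\delta)$. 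Equivalently, $\cos\theta_{\min}\le\delta$, which is the classical fact that $\lvert\langle D\mathbf{a},D\mathbf{b}\rangle\rvert\le\delta_{S\cup S'}\lVert\mathbf{a}\rVert\lVert\mathbf{b}\rVert$ for disjoint supports. Either form gives a valid $\alpha(\delta)$, and we take the smaller one so that both cases use a single constant.

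\textbf{General case.} Suppose $|S\cap S'|<k$. Then $S\setminus S'$ is nonempty, so pick $i\in S\setminus S'$. The column $\mathbf{d}_i\in\mathcal{U}_S$. Since $\sin\theta_{\max}(\mathcal{U}_S,\mathcal{U}_{S'})=\max_{\mathbf{u}\in\mathcal{U}_S,\lVert\mathbf{u}\rVert=1}\operatorname{dist}(\mathbf{u},\mathcal{U}_{S'})$ (the two subspaces have equal dimension when the columns are independent, which RIP guarantees), it suffices to bound $\operatorname{dist}(\mathbf{d}_i,\mathcal{U}_{S'})$ from below. This is the disjoint case with $\{i\}$ in place of $S$, because $\{i\}\cap S'=\emptyset$ and $\{i\}\cup S'\subseteq S\cup S'$, so the RIP constant of the smaller set is still at most $\delta$. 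It gives
\[
\operatorname{dist}(\mathbf{d}_i,\mathcal{U}_{S'})\ge\sqrt{1-\delta}\,\lVert\mathbf{d}_i\rVert\cdot\text{const}.
\]
Since columns are unit norm, this is at least $\sqrt{1-\delta}$, up to the normalisation factor already handled above.

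\textbf{Expected obstacle.} The main subtlety is the principal-angle conventions. $\theta_{\max}$ must be read as the largest principal angle, so that its sine equals the gap $\max_{\mathbf{u}}\operatorname{dist}(\mathbf{u},\mathcal{U}_{S'})$ over unit $\mathbf{u}\in\mathcal{U}_S$. This identification needs $\dim\mathcal{U}_S=\dim\mathcal{U}_{S'}=k$, which follows from RIP with $\delta<1$ because it forces linear independence of the columns. With these conventions fixed, both claims follow with the explicit constant $\alpha(\delta)=\sqrt{(1-\delta)/(1+\delta)}$.
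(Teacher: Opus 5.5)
Your proof is correct, and for the overlapping case it takes a different route from the paper's. The disjoint case is essentially the paper's computation: lower RIP on the stacked coefficients, and upper RIP to force $\lVert\mathbf{a}\rVert^2,\lVert\mathbf{b}\rVert^2\ge 1/(1+\delta)$. You convert this to an angle via $\min_t\lVert\mathbf{u}-t\mathbf{v}\rVert$ and get $\alpha=\sqrt{(1-\delta)/(1+\delta)}$. The paper instead uses $\lVert\mathbf{u}-\mathbf{v}\rVert^2=2-2\cos\theta_{\min}$ and gets the larger $\sqrt{1-(2\delta/(1+\delta))^2}$.

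The real difference is the overlapping case. The paper shows $\mathcal{U}_S\cap\mathcal{U}_{S'}=\mathcal{U}_I$, projects out $\mathcal{U}_I$, and proves that $(I-P_I)D_{A\cup B}$ inherits RIP at level $\delta$. It then bounds $\theta_{\min}$ between the residual spaces $\mathcal{U}_1=\mathcal{U}_S\cap\mathcal{U}_I^\perp$ and $\mathcal{U}_1'$, which carry all the nonzero principal angles. You instead pick one atom $\mathbf{d}_i$ with $i\in S\setminus S'$ and observe $\operatorname{dist}(\mathbf{d}_i,\mathcal{U}_{S'})=\min_{\mathbf{b}}\lVert D_{\{i\}\cup S'}(1,-\mathbf{b})\rVert\ge\sqrt{1-\delta}$. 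This bounds the largest principal angle at once. It is shorter, needs no projection machinery, and suffices for the lemma as stated.

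What the paper's route buys is a stronger statement: every unit direction of $\mathcal{U}_S$ orthogonal to the shared span is $\alpha$-far from $\mathcal{U}_{S'}$. The identifiability proof later relies on exactly this; its claim $\lVert(I-P_{S_2})\hat{\mathbf{x}}\rVert\ge\alpha\lVert(I-P_I)\hat{\mathbf{x}}\rVert$ cites $\sin\theta_{\min}(\mathcal{U}_1,\mathcal{U}_2)\ge\alpha$. A single-atom witness does not give this. However, your $\min_t$ argument applied to unit $\mathbf{u}=(I-P_I)D_A\mathbf{a}\in\mathcal{U}_1$ recovers it with the same constant.

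One small correction: your aside that your bound is ``equivalently'' $\cos\theta_{\min}\le\delta$ is not right. What you proved is $\cos^2\theta_{\min}\le 2\delta/(1+\delta)$. The off-diagonal Gram bound $\lvert\langle D\mathbf{a},D\mathbf{b}\rangle\rvert\le\delta\lVert\mathbf{a}\rVert\lVert\mathbf{b}\rVert$ by itself only gives $\cos\theta_{\min}\le\delta/(1-\delta)$. The sharp bound $\cos\theta_{\min}\le\delta$ does hold, for example by comparing $\lVert\mathbf{u}-\mathbf{v}\rVert^2$ with $\lVert\mathbf{u}+\mathbf{v}\rVert^2$. Since you fall back on the constant you actually derived, nothing in your argument depends on it.
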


\begin{proof}
    Denote the intersection by $I = S \cap S'$. Denote the isolated components of each support by $A = S \setminus I$ and $B = S' \setminus I$.
    
    \textbf{Claim.} $\mathcal{U}_S \cap \mathcal{U}_{S'} = \mathcal{U}_I$.
    
    \textbf{Proof of claim.} First, note that RIP implies that $D_S$, $D_{S'}$ and $D_{S \cup S'}$ all have full column rank. Let $\mathbf{x} \in \mathcal{U}_S \cap \mathcal{U}_{S'}$, noting that then there exist decompositions $\mathbf{x} = D_S \mathbf{z} = D_{S'} \mathbf{z}'$. As a result, we have:

    \[
    \mathbf{0} = D_S \mathbf{z} - D_{S'} \mathbf{z}' = D_I(\mathbf{z}_I - \mathbf{z}_I') + D_{A} \mathbf{z}_{A} - D_{B} \mathbf{z}_{B} = D_{S \cup S'} \mathbf{w}
    \]

    where $\mathbf{w}$ stacks the individual components, and must be zero by the fact that $D_{S \cup S'}$ has full column rank. Thus $\mathbf{z}_I = \mathbf{z}_I'$ and $\mathbf{x} = D_I \mathbf{z}_I$, proving the claim.

    Denote $\mathcal{U}_1 = \mathcal{U}_S \cap \mathcal{U}_I^{\perp}$ and $\mathcal{U}_1' = \mathcal{U}_{S'} \cap \mathcal{U}_I^{\perp}$.

    \textbf{Claim.} $\theta_\text{max}(\mathcal{U}_S, \mathcal{U}_{S'}) \geq \theta_\text{min}(\mathcal{U}_1, \mathcal{U}_1')$.

    \textbf{Proof of claim.} We have $\mathcal{U}_S = \mathcal{U}_1 \oplus \mathcal{U}_I^{\perp}$ and $\mathcal{U}_{S'} = \mathcal{U}_1' \oplus \mathcal{U}_I^{\perp}$ with $\dim(\mathcal{U}_1) = \dim(\mathcal{U}_1') = k - |I|$. This means the nonzero principal angles between $\mathcal{U}_S$ and $\mathcal{U}_{S'}$ are exactly the principal angles between $\mathcal{U}_1$ and $\mathcal{U}_1'$. In particular, we have the claim.

    Denote by $P_I$ the orthogonal projector onto $\mathcal{U}_I$. 

    \textbf{Claim.} The projected dictionary $(I - P_I)D_{A \cup B}$ satisfies RIP at the same level $\delta$. 

    \textbf{Proof of claim.} Let $\mathbf{z}_{A \cup B} \in \mathbb{R}^{|A \cup B|}$ denote an arbitrary vector and let $\mathbf{z}_I$ denote the least-squares minimizer of $\lVert D_{A \cup B} \mathbf{z}_{A \cup B} - D_I \mathbf{z}_I \rVert_2$. Let $\mathbf{r} = D_{A \cup B} \mathbf{z}_{A \cup B} - D_I \mathbf{z}_I = (I - P_I)D_{A \cup B} \mathbf{z}_{A \cup B} \in \mathcal{U}_I^\perp$ be the residual. Stacking $\mathbf{z}_{A \cup B}$ and $-\mathbf{z}_I$ into $\mathbf{w} \in \mathbb{R}^{|S \cup S'|}$, we have $D_{S \cup S'} \mathbf{w} = D_{A \cup B} \mathbf{z}_{A \cup B} - D_I \mathbf{z}_I = \mathbf{r}$. Now, we have:

    \[
    \lVert (I - P_I) D_{A \cup B} \mathbf{z}_{A \cup B} \rVert^2 = \lVert \mathbf{r} \rVert^2 = \lVert D_{S \cup S'} \mathbf{w} \rVert^2 \geq (1 - \delta) \lVert \mathbf{w} \rVert^2 \geq (1 - \delta) \lVert \mathbf{z}_{A \cup B} \rVert^2
    \]

    and also by the fact that $\delta_{A \cup B} \leq \delta_{S \cup S'}$:

    \[
    \lVert (I - P_I) D_{A \cup B} \mathbf{z}_{A \cup B} \rVert^2 \leq \lVert D_{A \cup B} \mathbf{z}_{A \cup B} \rVert^2 \leq (1 + \delta)\lVert \mathbf{z}_{A \cup B} \rVert^2
    \]

    which yields the claim by the arbitrariness of $\mathbf{z}_{A \cup B}$.

    \textbf{Claim.} $\sin \theta_\text{min}(\mathcal{U}_1, \mathcal{U}_1') \geq \alpha(\delta)$ for $\alpha(\delta) = \sqrt{1 - \left(\frac{2\delta}{1 + \delta}\right)^2}$.

    \textbf{Proof of claim.} Let $\mathbf{u}_S$ and $\mathbf{u}_{S'}$ be unit vectors realizing the minimum principal angle $\langle \mathbf{u}_S, \mathbf{u}_{S'} \rangle = \cos \theta_\text{min}(\mathcal{U}_S, \mathcal{U}_{S'})$. Then $\mathbf{u}_S = (I - P_I) D_S \mathbf{z}_S$ and $\mathbf{u}_{S'} = (I - P_I) D_{S'} \mathbf{z}_{S'}$. Then,

    \[
    \lVert \mathbf{u}_S - \mathbf{u}_{S'} \rVert^2 \geq (1 - \delta)(\lVert \mathbf{z}_S \rVert^2 + \lVert \mathbf{z}_{S'} \rVert^2) \geq 2(1 - \delta)/(1 + \delta)
    \]

    where the first inequality follows by hypothesis and the second from the previous claim. Rearranging the usual definition of cosine similarity for unit vectors, we have $\cos \theta_\text{min}(\mathcal{U}_S, \mathcal{U}_{S'}) \leq 2\delta/(1 + \delta)$. By the Pythagorean identity, we have the claim.

    Using the second claim, the first statement of the lemma follows. If $S \cap S' = \emptyset$, then $\mathcal{U}_1 = \mathcal{U}_S$ and $\mathcal{U}_1' = \mathcal{U}_{S'}$ so the latter statement of the lemma follows.
\end{proof}

Our second lemma shows that a given support in the first SAE must in some sense map uniquely to a single support in the second SAE, where uniqueness is defined in the same way as in the separation argument in the previous lemma. In the language of Figure \ref{fig:fig1}, this lemma shows that if patches are well-separated in a pair of SAEs, low reconstruction error is enough to ``adhere'' a particular linear patch to a region of the observation manifold, and this linear patch can't look too different from the linear patch approximating that region of the manifold in the second SAE.

\begin{lemma}
    \label{lemma:uniqueMapping}
    Let $\mathbf{x}^{(i)} = D\mathbf{z}^{(i)} + \boldsymbol{\epsilon}^{(i)}$ denote $k$ distinct $k$-sparse decompositions with sparse support $S$ under the first SAE satisfying (A4), with smallest singular value bounded below by $\zeta$. Further, assume that these observations also admit a decomposition under the second SAE, $\mathbf{x}^{(i)} = D' \mathbf{z}'^{(i)} + \boldsymbol{\epsilon}'^{(i)}$, with sparse support $T^{(i)}$ with $|T^{(i)}| = k$. Let $\mathcal{T}_S$ be the set of supports that occur in the second SAE, and suppose
    \[
    \sin \theta_\text{max}(\mathcal{V}_T, \mathcal{V}_{T'}) \geq \beta > 0
    \]
    for all distinct $T, T' \in \mathcal{T}$. If $\epsilon < \frac{\sqrt{1 - \delta} \zeta \beta}{4 \sqrt{k}}$, then all supports coincide: $T^{(1)} = \cdots = T^{(k)} = T$ for some $T$. Furthermore,
    \[
    \sin \theta_\text{max}(\mathcal{U}_S, \mathcal{V}_T) \leq \frac{2\sqrt{k}}{\sqrt{1 - \delta} \zeta} \epsilon
    \]
    and for every $T' \neq T$ in $\mathcal{T}_S$,
    \[
    \sin \theta_\text{max}(\mathcal{U}_S, \mathcal{V}_{T'}) \geq \beta/2
    \]
\end{lemma}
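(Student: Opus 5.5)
Everything will be measured in terms of the distance from points of $\mathcal{U}_S$ to the candidate subspaces $\mathcal{V}_T$. Write $X = [\mathbf{x}^{(1)},\dots,\mathbf{x}^{(k)}]$ and $Z = [\mathbf{z}^{(1)}_S,\dots,\mathbf{z}^{(k)}_S] \in \mathbb{R}^{k\times k}$, so that $D_S Z = X - E$ with $E$ the stacked residuals. By (A4), $\sigma_{\min}(Z) \ge \zeta$. By aRIP, $\sigma_{\min}(D_S)\ge \sqrt{1-\delta}$. Hence $\sigma_{\min}(D_S Z) \ge \sqrt{1-\delta}\,\zeta$, and $D_S Z$ spans all of $\mathcal{U}_S$.

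\textbf{Upper bound.} I will first prove the bound on $\sin\theta_{\max}(\mathcal{U}_S,\mathcal{V}_T)$, assuming the supports coincide. Every $\mathbf{x}^{(i)}$ lies within $\epsilon$ of $\mathcal{U}_S$ and within $\epsilon$ of $\mathcal{V}_T$, so $D_S\mathbf{z}^{(i)}_S$ lies within $2\epsilon$ of $\mathcal{V}_T$. Take a unit $\mathbf{u}\in\mathcal{U}_S$ and write $\mathbf{u} = D_S Z \mathbf{c}$. Then $\|\mathbf{c}\| \le 1/(\sqrt{1-\delta}\zeta)$ and $\|\mathbf{c}\|_1 \le \sqrt{k}\|\mathbf{c}\|$. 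This gives
\[
\mathrm{dist}(\mathbf{u},\mathcal{V}_T) \le \sum_i |c_i|\,2\epsilon \le \frac{2\sqrt{k}}{\sqrt{1-\delta}\zeta}\epsilon .
\]
Both subspaces have dimension $k$: $\mathcal{V}_T$ by the second SAE's aRIP, and $\mathcal{U}_S$ by the first. For equal-dimensional subspaces, the largest such distance over unit $\mathbf{u}$ equals $\sin\theta_{\max}$, which yields the stated bound.

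\textbf{Supports coincide.} This is the main obstacle. I would argue by contradiction: suppose $T^{(i)} \neq T^{(j)}$ for some $i,j$. The tool I would try first is a pigeonhole-type statement: a $k$-dimensional space $\mathcal{U}_S$, spanned with good conditioning by $k$ points, cannot be covered by the $\epsilon$-neighbourhoods of two subspaces that are mutually $\beta$-separated in $\theta_{\max}$. The reason is that grouping the points by their $T$ gives subsets whose spans partition the directions, and a single support $T_0$ would have to approximate all of $\mathcal{U}_S$.

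Concretely, fix a support $T_0$ occurring among the $T^{(i)}$. Let $\mathbf{d}_i = \mathrm{dist}(D_S\mathbf{z}^{(i)}_S, \mathcal{V}_{T_0})$. Each $\mathbf{d}_i$ is at most $2\epsilon$ when $T^{(i)}=T_0$ and is unconstrained otherwise. I would need to show that the points with $T^{(i)}\neq T_0$ force $\sin\theta_{\max}(\mathcal{U}_S,\mathcal{V}_{T'}) < \beta/2$ for their own $T'$ as well. Running the upper-bound argument restricted to those subsets does not immediately give a full-dimensional span, and this is exactly the gap. My fallback is to assume (reading (A4) generously) that the $k$ well-conditioned samples can be chosen from a common second-SAE support class. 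Then each $\mathcal{V}_{T^{(i)}}$ is within $\eta := \frac{2\sqrt{k}\epsilon}{\sqrt{1-\delta}\zeta} < \beta/2$ of $\mathcal{U}_S$. The triangle inequality for $\sin\theta_{\max}$ (a metric on equal-dimensional subspaces via projector norms) then gives $\sin\theta_{\max}(\mathcal{V}_{T^{(i)}},\mathcal{V}_{T^{(j)}}) < \beta$. This contradicts the $\beta$-separation, so all the $T^{(i)}$ are equal.

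\textbf{Last claim.} For any $T'\neq T$ in $\mathcal{T}_S$, I would again use the triangle inequality:
\[
\beta \le \sin\theta_{\max}(\mathcal{V}_T,\mathcal{V}_{T'}) \le \eta + \sin\theta_{\max}(\mathcal{U}_S,\mathcal{V}_{T'}),
\]
with $\eta<\beta/2$ by the hypothesis on $\epsilon$. This gives $\sin\theta_{\max}(\mathcal{U}_S,\mathcal{V}_{T'}) \ge \beta/2$.
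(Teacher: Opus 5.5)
Your upper bound and your last step follow the paper's argument. The paper writes $\hat X=QR$, bounds $\lVert (I-P_T)\hat X\rVert_F\le 2\sqrt{k}\,\epsilon$ and divides by $\sigma_{\min}(\hat X)\ge\zeta\sqrt{1-\delta}$; this is your coefficient expansion with $\lVert\mathbf{c}\rVert_1\le\sqrt{k}\lVert\mathbf{c}\rVert$. It then gets the $\beta/2$ bound by the same reverse triangle inequality. The gap you flag in the middle step is genuine, and the paper does not close it. Its proof applies the single-support tube bound to $T^{(i)}$ and to $T^{(j)}$ separately, saying the argument ``only hinges on the projection operator $P_{T^{(i)}}$'', and concludes $\sin\theta_{\max}(\mathcal{V}_{T^{(i)}},\mathcal{V}_{T^{(j)}})\le 4\sqrt{k}\,\epsilon/(\zeta\sqrt{1-\delta})<\beta$. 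That step is invalid for exactly the reason you give. The tube bound needs all $k$ columns of $\hat X$ to lie within $2\epsilon$ of $\mathcal{V}_{T^{(i)}}$, but only the points whose second-SAE support is $T^{(i)}$ are known to be there.

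No argument can close this gap, because the stated hypotheses do not imply the conclusion. Here is a counterexample.
\begin{itemize}
\item Take $k=2$ and $D_S=[e_1,e_2]$. Let $D'$ have columns $e_1,e_3,e_2,e_4$ with $T_1=\{1,2\}$ and $T_2=\{3,4\}$. Then $\delta=0$, and $\mathcal{V}_{T_1}\perp\mathcal{V}_{T_2}$ gives $\beta=1$.
\item With $\tau=0.1$, set $\mathbf{x}^{(1)}=e_1+\tau e_2+\tau e_3$ and $\mathbf{x}^{(2)}=\tau e_1+e_2+\tau e_4$.
\item The first SAE uses codes $(1,\tau)$ and $(\tau,1)$ on $S$, so $\zeta=0.9$. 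The second SAE uses code $(1,\tau)$ on $T_1$ and on $T_2$ respectively.
\end{itemize}
Every residual has norm $0.1=\epsilon<0.9/(4\sqrt2)$, and every code is least-squares optimal on its support. Yet $T^{(1)}\neq T^{(2)}$, and $\sin\theta_{\max}(\mathcal{U}_S,\mathcal{V}_{T_\ell})=1$ for both $\ell$.

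So your fallback is a necessary strengthening of the hypotheses, not a generous reading of (A4). As you phrase it (the $k$ samples share one second-SAE support), it simply assumes the conclusion for those $k$ points. The version that makes your triangle-inequality argument do real work is (A4) conditioned on the pair $(S,T)$: for every $T$ occurring among points with first-SAE support $S$, there are $k$ such points, all with second-SAE support $T$, whose code matrix has $\sigma_{\min}\ge\zeta$. Under that hypothesis your upper bound holds for every occurring $T$. Any two occurring supports would then be within $2\eta<\beta$ of each other, so only one occurs, and your last step goes through. The same strengthening is needed where the proof of Theorem~\ref{thm:identifiability} uses this lemma to define the support map $\Phi$.
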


\begin{proof}
    Denote the reconstructions under the first SAE as $\mathbf{\hat x}^{(i)} = D_S \mathbf{z}^{(i)}_S$. Our first claim is that if all $T^{(i)}$ are actually a single support $T$, we have that $\mathcal{U}_S$ is near $\mathcal{V}_T$.

    \textbf{Claim.} Suppose $T^{(1)} = \cdots = T^{(k)} = T$ for some support $T$. Then
    \[
    \sin \theta_\text{max}(\mathcal{U}_S, \mathcal{V}_T) \leq \frac{2 \sqrt{k}}{\zeta \sqrt{1 - \delta}} \epsilon
    \]

    \textbf{Proof of claim.} Applying the triangle inequality, we have the tube constraint that the reconstruction under the first SAE cannot be far from $\mathcal{V}_T$: $\lVert (I - P_T)\mathbf{\hat{x}}^{(i)} \rVert \leq 2\epsilon$. Stacking the reconstructions into a matrix $\hat{X}$, we have $\lVert (I - P_T) \hat{X} \rVert_\text{op} \leq \lVert (I - P_T) \hat{X} \rVert_F \leq 2\epsilon \sqrt{k}$. Let $Q$ be an orthonormal basis matrix for $\mathcal{U}_S$. Write $\hat{X} = QR$, where $R$ is invertible by the fact that $D_S$ is full rank and (A4), and furthermore $\sigma_\text{min}(\hat{X}) \geq \sigma_\text{min}(D_S Z) \geq \sigma_\text{min}(D_S) \sigma_\text{min}(Z) \geq \zeta \sqrt{1 - \delta}$, thus
    \[
    \sin \theta_\text{max}(\mathcal{U}_S, \mathcal{V}_T) = \lVert (I - P_T) Q \rVert_\text{op} \leq \lVert (I - P_T) \hat{X} \rVert_\text{op} \lVert R^{-1} \rVert_\text{op} \leq \frac{2\epsilon \sqrt{k}}{\zeta \sqrt{1 - \delta}}
    \]
    where the denominator in the final inequality follows from the fact that $\sigma_\text{min}(\hat{X}) = 1/\sigma_\text{min}(R^{-1})$.

    Next, we consider the case where the supports in the second SAE do not coincide.

    \textbf{Claim.} $T^{(1)} = \cdots = T^{(k)} = T$ for some support $T$.

    \textbf{Proof of claim.} Consider toward a contradiction that $T^{(i)} \neq T^{(j)}$ for some $i$ and $j$. Noting that the argument in the previous claim only hinges on the projection operator $P_{T^{(i)}}$, we can apply the same argument to $T^{(i)}$ and $T^{(j)}$ independently, combining the results to obtain
    \[
    \sin \theta_\text{max}(\mathcal{V}_{T^{(i)}}, \mathcal{V}_{T^{(j)}}) \leq \sin \theta_\text{max}(\mathcal{U}_S, \mathcal{V}_{T^{(i)}}) + \sin \theta_\text{max}(\mathcal{U}_S, \mathcal{V}_{T^{(j)}}) \leq \frac{4\epsilon \sqrt{k}}{\zeta \sqrt{1 - \delta}}
    \]
    which contradicts the hypotheses $\sin \theta_\text{max}(\mathcal{V}_{T^{(i)}}, \mathcal{V}_{T^{(j)}}) \geq \beta$ and $\epsilon < \frac{\sqrt{1 - \delta} \zeta \beta}{4\sqrt{k}}$.

    Together, these claims give the main conclusion. The second conclusion follows from the fact that for any other $T'$, we have
    \[
    \sin \theta_\text{max}(\mathcal{U}_S, \mathcal{V}_{T'}) \geq \sin \theta_\text{max}(\mathcal{U}_S, \mathcal{V}_T) - \sin \theta_\text{max}(\mathcal{V}_T, \mathcal{V}_{T'}) \geq \beta - \frac{2\epsilon \sqrt{k}}{\zeta \sqrt{1 - \delta}} \geq \beta/2
    \]
    where the final bound follows from the hypothesis on $\epsilon$.
\end{proof}

With these two lemmas in hand, we can prove our identifiability theorem.

\begin{reptheorem}[\ref{thm:identifiability}, formal]
    Let $\mathbf{X}$ denote a random observation, and consider a pair of trained sparse autoencoders $(f, D)$ and $(f', D')$. Denote the sparse codes $\mathbf{Z} = f(\mathbf{X})$ and $\mathbf{Z}' = f'(\mathbf{X})$, and suppose we have $\mathbf{X} = D \mathbf{Z} + \boldsymbol{\epsilon} = D'\mathbf{Z}' + \boldsymbol{\epsilon'}$ where $\lVert\boldsymbol{\epsilon}\rVert, \lVert\boldsymbol{\epsilon'}\rVert\leq \epsilon$ for $\epsilon$. Furthermore, suppose that the approximate RIP assumption (A1) is satisfied for both models, that the observed support distributions in both models are sufficiently rich to isolate concepts (A2), that the reconstruction error in both models is bounded (A3), and the observation distribution is bounded (A5) sufficiently diverse to witness all dimensions of each latent support (A4). Finally, assume that the models are sufficiently trained such that the sparse codes satisfy the least-squares solution on their supports (A6).
    Then, there exists a signed permutation $\pi$ (or $\Pi$, in matrix form) such that we have:
    \begin{enumerate}
        \item \textbf{Dictionary near-identifiability.} $\lVert \mathbf{d}_i - \mathbf{d}'_{\pi(i)} \rVert \leq 2 C_1(\delta) \eta$ for $C_1(\delta) = 1 + 2/\alpha(\delta)$
        \item \textbf{Code near-identifiability.} $\lVert \mathbf{Z} - \Pi \mathbf{Z}' \rVert \leq \frac{2\epsilon}{\sqrt{1 - \delta}} + \frac{(2\sqrt{k}C_1(\delta) \eta)(B + \epsilon)}{1 - \delta}$
    \end{enumerate}
    where $\eta = \frac{2\sqrt{k}}{\zeta \sqrt{1 - \delta}}\epsilon$ and $\alpha(\delta) = \sqrt{1 - \left(\frac{2\delta}{1 + \delta}\right)^2}$.
\end{reptheorem}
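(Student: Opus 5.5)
The plan has three stages: match supports, then atoms, then codes.

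\textbf{Stage 1: support-to-support map.} Apply Lemma \ref{lemma:separation} to the second SAE under (A1): for distinct observed supports $T,T'$ of $D'$, $\sin\theta_\text{max}(\mathcal{V}_T,\mathcal{V}_{T'})\geq\alpha(\delta)$. So the hypothesis of Lemma \ref{lemma:uniqueMapping} holds with $\beta=\alpha(\delta)$. The bound $\epsilon<\zeta\sqrt{1-\delta}\,\alpha(\delta)/(4\sqrt{k})$ in (A3) is exactly the threshold that lemma needs. Using (A4), pick for each observed support $S$ of the first SAE a well-conditioned batch of $k$ observations. Lemma \ref{lemma:uniqueMapping} then gives a unique support $\tau(S)$ of the second SAE with $\sin\theta_\text{max}(\mathcal{U}_S,\mathcal{V}_{\tau(S)})\leq\eta$. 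Running the argument symmetrically and using the $\beta/2$ separation shows that $\tau$ is a bijection between observed supports. It is also well-defined almost surely: every $\mathbf{x}$ coded on $S$ must be coded on $\tau(S)$, since otherwise adjoining $\mathbf{x}$ to the batch would contradict the coincidence conclusion.

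\textbf{Stage 2: atoms.} Isolate each atom via (A2). Choose observed $S,S'$ with $S\cap S'=\{i\}$, so $\mathcal{U}_S\cap\mathcal{U}_{S'}=\mathrm{span}(\mathbf{d}_i)$ by the first claim of Lemma \ref{lemma:separation}. The aim is to show that $\tau(S)\cap\tau(S')$ is a single index $\pi(i)$ and that $\mathbf{d}'_{\pi(i)}$ is within $2C_1(\delta)\eta$ of $\pm\mathbf{d}_i$.

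\emph{Stability of intersections (main obstacle).} Two pairs of $k$-dimensional subspaces, each pair $\eta$-close, with the remaining directions separated by principal angles bounded below by $\alpha(\delta)$, should have intersections that are $O(\eta/\alpha)$-close. I would prove this by projecting a unit vector $\mathbf{d}_i$ onto $\mathcal{V}_{\tau(S)}$ and onto $\mathcal{V}_{\tau(S')}$; each projection is within $\eta$ of $\mathbf{d}_i$. The component orthogonal to $\mathcal{V}_{\tau(S)\cap\tau(S')}$ is then controlled by the minimum-angle separation from Lemma \ref{lemma:separation}, giving distance at most $(1+2/\alpha)\eta=C_1\eta$; the factor $2$ comes from renormalization.

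\emph{Cardinality.} An empty intersection would contradict $\sin\theta_\text{min}\geq\alpha$ for disjoint supports, since $\mathbf{d}_i$ lies nearly in both subspaces. An intersection with two or more atoms would, by symmetry, force $|S\cap S'|\geq2$.

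\emph{Well-definedness and injectivity.} Independence of $\pi(i)$ from the choice of $S,S'$ follows because two such atoms in $D'$ would both be near $\pm\mathbf{d}_i$, violating aRIP on a common support. The disjointness clause of (A2) gives injectivity. Fixing the sign to align $\mathbf{d}'_{\pi(i)}$ with $\mathbf{d}_i$ defines $\Pi$.

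\textbf{Stage 3: codes.} Use (A6) to write $\mathbf{z}_S=D_S^{+}\mathbf{x}$ and $\mathbf{z}'_{T}=D_T'^{+}\mathbf{x}$, where $T=\tau(S)=\pi(S)$. Split
\[
\mathbf{z}_S-\Pi\mathbf{z}'=D_S^{+}(\mathbf{x}-D'_T\mathbf{z}'_T)+(D_S^{+}D'_T\Pi^{\intercal}-I)\,\Pi\mathbf{z}'_T .
\]
\emph{First term.} It has norm at most $\epsilon/\sqrt{1-\delta}$. Combining it with the $\epsilon$-error of the first SAE gives the $2\epsilon/\sqrt{1-\delta}$ term.

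\emph{Second term.} It equals $D_S^{+}(D'_T\Pi^{\intercal}-D_S)\Pi\mathbf{z}'_T$. The column-wise bound of Stage 2 gives $\lVert D'_T\Pi^{\intercal}-D_S\rVert_\text{op}\leq 2\sqrt{k}C_1\eta$. Also $\lVert D_S^{+}\rVert\leq(1-\delta)^{-1/2}$, and $\lVert\mathbf{z}'_T\rVert\leq(B+\epsilon)/\sqrt{1-\delta}$ by (A5). Together these yield the stated second term, almost surely.
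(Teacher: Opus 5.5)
Your proposal follows essentially the same route as the paper's proof: the support map from Lemma~\ref{lemma:separation} (with $\beta=\alpha(\delta)$) combined with Lemma~\ref{lemma:uniqueMapping}; atom matching through the isolating pairs of (A2), with a non-empty and then singleton intersection, well-definedness via aRIP and injectivity via the disjointness clause; and the same code bound via $\lVert D_S^{+}\rVert_\text{op}\,\lVert D'_T\Pi^{\intercal}-D_S\rVert_\text{op}\,\lVert \mathbf{z}'_T\rVert$. Your one addition is the claim that every $\mathbf{x}$ coded on $S$ is coded on $\tau(S)$, which the paper uses silently in its code step, but your ``adjoining'' justification is not covered by Lemma~\ref{lemma:uniqueMapping}: the lemma is stated for exactly $k$ points, a $(k+1)$-point version carries $\sqrt{k+1}$ where (A3) only allows $\sqrt{k}$, and a single point whose reconstruction lies near $\mathcal{V}_{\tau(S)}\cap\mathcal{V}_{T''}$ contradicts no max-angle separation, so this step is no better supported than in the paper.
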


\begin{proof}
    We begin by showing that for a given support in the first SAE, there is only one sparse support in second SAE that can accurately represent the observations from that support.
    
    \textbf{Claim.} For a sparse support $S$ in the first SAE, we have a well-defined support map $T = \Phi(S)$ where $T$ is a sparse support in the second SAE. Furthermore, we have $\sin \theta_\text{max}(\mathcal{U}_S, \mathcal{V}_{T}) \leq \frac{2\sqrt{k}}{\zeta \sqrt{1 - \delta}\epsilon} =: \eta$.

    \textbf{Proof of claim.} Using (A4), select a batch of $k$ observations $\mathbf{x}^{(i)}$ with the support $S$ such that their coefficient matrix $Z$ satisfies $\sigma_\text{min}(Z) \geq \zeta$. Note that each observation also admits a $k$-sparse decomposition with appropriately bounded error in the second SAE, supported on $T^{(i)}$. By Lemma \ref{lemma:separation}, we have that for $T \neq T' \in \mathcal{T}_S$, the set of these supports, $\sin \theta_\text{max}(\mathcal{V}_T, \mathcal{V}_{T'}) \geq \sqrt{1 - \left(\frac{2\delta}{1 + \delta}\right)^2} =: \alpha(\delta)$. By the fact that $\epsilon < \frac{\zeta\sqrt{1 - \delta}\alpha(\delta)}{4 \sqrt{k}}$, Lemma \ref{lemma:uniqueMapping} implies that all the supports coincide: $T := T^{(1)} = \cdots T^{(k)}$, along with the bound. From here on, we set $T = \Phi(S)$.

    Next, we use this support-by-support mapping to show the existence of a concept-by-concept mapping between the two SAEs. Pick a concept $i$ in the first SAE. By the sufficient richness assumption (A2), we can select two supports $S_1$ and $S_2$ such that $S_1 \cap S_2 = \{i\}$. Let $T_1 = \Phi(S_1)$ and $T_2 = \Phi(S_2)$ be the corresponding supports in the second SAE as given in the previous claim, satisfying $\sin \theta_\text{max}(\mathcal{U}_{S_\ell}, \mathcal{V}_{T_\ell}) \leq \eta$. 

    Now, we need a technical claim which bounds the distance between the two supports in the same SAE in terms of the distance from the intersection.

    \textbf{Claim.} For every $\mathbf{\hat{x}} \in \mathcal{U}_{S_1}$, we have $\lVert (I - P_{S_2}) \mathbf{\hat{x}} \rVert \geq \alpha \lVert (I - P_I) \mathbf{\hat{x}} \rVert$.

    \textbf{Proof of claim.} Denote $\mathcal{U}_I = \text{span}(\mathbf{d}_i)$, and define the residual spaces $\mathcal{U}_1 = \mathcal{U}_{S_1} \cap \mathcal{U}_I^\perp$ and $\mathcal{U}_2 = \mathcal{U}_{S_2} \cap \mathcal{U}_I^\perp$. By the second claim in Lemma \ref{lemma:separation}, we have $\sin \theta_\text{min}(\mathcal{U}_1, \mathcal{U}_2) \geq \alpha(\delta)$. For any $\mathbf{\hat{x}} \in \mathcal{U}_{S_1}$,
    \[
    \lVert(I - P_{S_2}) \mathbf{\hat{x}} \rVert = \lVert (I - P_{\mathcal{U}_2})(I - P_I) \mathbf{\hat{x}}\rVert \geq \alpha(\delta)\lVert(I - P_I) \mathbf{\hat{x}}\rVert
    \]
    
    Now, we study the other SAE. Our first goal is to show that like in the first SAE, these supports must share at least one concept.

    \textbf{Claim.} $T_1 \cap T_2 \neq \emptyset$

    \textbf{Proof of claim.} Assume toward a contradiction that $T_1 \cap T_2 = \emptyset$. Note that Lemma \ref{lemma:separation} yields $\sin \theta_\text{min}(\mathcal{V}_{T_1}, \mathcal{V}_{T_2}) \geq \alpha(\delta)$. Pick $\mathbf{v}$ in $\mathcal{V}_{T_1}$ such that it's closest to $\mathbf{d}_i$, then by the triangle inequality we have
    \[
    \lVert(I - P_{T_2}) \mathbf{v} \rVert \leq \lVert \mathbf{v} - \mathbf{d}_i \rVert + \lVert(I - P_{T_2}) \mathbf{d}_i \rVert \leq 2\eta
    \]
    which contradicts the minimum principal angle bound.

    Now, we show that the corresponding supports in the other SAE share exactly one concept.
    
    \textbf{Claim.} $|T_1 \cap T_2| = 1$
    
    \textbf{Proof of claim.} Assume toward a contradiction that $J = T_1 \cap T_2$ satisfies $|J| \geq 2$. Now, given $\mathcal{V}_\mathcal{J} = \mathcal{V}_{T_1} \cap \mathcal{V}_{T_2}$, pick a unit vector $\mathbf{v}$ in $\mathcal{V}_\mathcal{J}$ such that $\mathbf{v} \perp \mathbf{d}_i$. By the previous claim, we can pick $\mathbf{u}_1 \in \mathcal{U}_{S_1}$ $\eta$-close to $\mathbf{v}$ and $\mathbf{u}_2 \in \mathcal{U}_{S_2}$, so $\lVert \mathbf{u}_1 - \mathbf{u}_2 \rVert \leq \lVert \mathbf{u}_1 - \mathbf{v} \rVert + \lVert \mathbf{u}_2 - \mathbf{v} \rVert \leq 2\eta$ by the triangle inequality.

    Now, we have $\lVert (I - P_I) \mathbf{v} \rVert = 1$ by the perpendicularity, and therefore $\lVert (I - P_I) \mathbf{u}_1 \rVert \geq 1 - \eta$. Thus, by our previous technical claim we have $\lVert(I - P_{S_2}) \mathbf{u}_1 \rVert \geq \alpha(\delta) (1 - \eta)$. Combining, we have $2\eta \geq \lVert \mathbf{u}_1 - \mathbf{u}_2 \rVert \geq \lVert (I - P_{S_2}) \mathbf{u}_1 \rVert \geq \alpha(\delta)(1 - \eta)$ which contradicts the $\epsilon$ bound. Therefore $|J| = 1$.

    Because in the first SAE $S_1$ and $S_2$ share a single concept, and $T_1$ and $T_2$ also share a single concept, we posit a mapping between the two. However, we have to show that this mapping is well-defined, in the sense that a different pair of supports in the first SAE sharing the same concept maps to the same concept in the second SAE.

    \textbf{Claim.} Defining $\pi(i)$ as the unique element of $T_1 \cap T_2$ gives a well-defined mapping $\pi : [K] \rightarrow [K]$, independent of the particular choices of $S_1$ and $S_2$.

    \textbf{Proof of claim.} Take any alternative choice $\tilde{S}_1$ and $\tilde{S}_2$, and apply the previous two claims to obtain $\tilde{T}_1 = \Phi(\tilde{S}_1)$ and $\tilde{T}_2 = \Phi(\tilde{S}_2)$ with $\tilde{T}_1 \cap \tilde{T}_2 = \{\tilde{j}\}$. Take $\mathbf{v} \in \mathcal{V}_{T_1}$ such that $\lVert \mathbf{v} - \mathbf{d}_i \rVert \leq \eta$, by Lemma \ref{lemma:uniqueMapping} applied to $\mathcal{U}_{S_1}$ and $\mathcal{V}_{T_1}$. Then, we have
    \[
    \lVert(I - P_{T_2}) \mathbf{v} \rVert \leq \lVert (I - P_{T_2}) \mathbf{d}_i \rVert + \lVert \mathbf{v} - \mathbf{d}_i \rVert \leq 2\eta
    \]
    yielding $\lVert (I - P_j) \mathbf{v} \rVert \leq 2\eta/\alpha(\delta)$ via Lemma \ref{lemma:separation}. Thus, we have
    \[
    \lVert (I - P_j) \mathbf{d}_i \rVert \leq \lVert \mathbf{d}_i - \mathbf{v} \rVert + \lVert(I - P_j) \mathbf{v} \rVert \leq (1 + 2/\alpha(\delta)) \eta
    \]
    and a similar argument yields $\lVert (I - P_{\tilde{j}}) \mathbf{d}_i \rVert \leq (1 + 2/\alpha(\delta))\eta := \rho$.
    Suppose now toward a contradiction that $j \neq \tilde j$. Choosing signs $s$ and $s'$ appropriately, we have $\lVert \mathbf{d}_i - s \mathbf{d}_j' \rVert \leq 2\rho$ and $\lVert \mathbf{d}_i - s \mathbf{d}_{\tilde{j}}' \rVert \leq 2\rho$. Applying the triangle inequality gives $\langle \mathbf{d}_j', \mathbf{d}_{\tilde{j}}' \rangle \geq 1 - 8\rho^2$, i.e. $\rho \geq \sqrt{(1 - \delta)/8}$ by restricting the RIP condition (A1). This yields a contradiction given our $\epsilon$ bound.

    As a result, we can define $\pi(i) = j$. It remains to show that it's a permutation, which covers the same edge case as in the previous claim but in reverse.

    \textbf{Claim.} $\pi$ is a permutation of $[K]$.

    \textbf{Proof of claim.} Suppose toward a contradiction $\pi$ is not injective, with $\pi(i) = \pi(i') = j$ with $i \neq i'$ for $i \in S$ and $i' \in S'$. For $T = \Phi(S)$ and $T' = \Phi(S')$, we have $\sin \theta_\text{min}(\mathcal{V}_T, \mathcal{V}_{T'}) = 0$ by the fact that $\text{span}(\mathbf{d}_j') \subset \mathcal{V}_T \cap \mathcal{V}_{T'}$. On the other hand, Lemma \ref{lemma:separation} gives $\sin \theta_\text{min}(\mathcal{U}_S, \mathcal{U}_{S'}) \geq \alpha(\delta)$. By the triangle inequality and the fact that $\sin \theta_\text{max}(\mathcal{U}_S, \mathcal{V}_T) \leq \eta$ and $\sin \theta_\text{max}(\mathcal{U}_{S'}, \mathcal{V}_{T'}) \leq \eta$, we have $\sin \theta_\text{min}(\mathcal{V}_T, \mathcal{V}_{T'}) \geq \alpha(\delta) - 2\eta$. Given our $\epsilon$ bound, this is a contradiction. Thus $\pi$ is injective, and given that the domain and codomain are finite and of the same size, a permutation.

    The hard part is done. Now, using the mapping $\pi$ we constructed, we can show individual concepts in the dictionary are near-identifiable up to signs, according to the permutation $\pi$.

    \textbf{Claim.} For either $s_i = +1$ or $s_i = -1$, we have $\lVert \mathbf{d}_i - s_i \mathbf{d}'_{\pi(i)} \rVert \leq 2 C_1(\delta) \eta$ for $C_1(\delta) = 1 + 2/\alpha(\delta)$.
    
    \textbf{Proof of claim.} Fix $i$ and pick $S_1$ and $S_2$ with $S_1 \cap S_2 = \{i\}$. Set $T_1 = \Phi(S_1)$ and $T_2 = \Phi(S_2)$, by the previous claims we have $T_1 \cap T_2 = \{\pi(i)\}$. For any $\mathbf{v} \in \mathcal{V}_{T_1}$, we have
    \[
    \lVert(I - P_{T_2}) \mathbf{v}\rVert \geq \alpha(\delta) \lVert(I - P_{\pi(i)}) \mathbf{v}\rVert
    \]
    where $P_{\pi(i)}$ is the projection matrix onto $\mathcal{V}_{\pi(i)} = \text{span}(\mathbf{d}_{\pi(i)}')$. Select $\mathbf{w} \in \mathcal{V}_{T_1}$ such that $\lVert \mathbf{d}_i - \mathbf{w}\rVert \leq \eta$. By the triangle inequality, we have $\lVert(I - P_{T_2}) \mathbf{v} \rVert \leq \lVert (I - P_{T_2}) \mathbf{d}_i \rVert + \lVert \mathbf{d}_i - \mathbf{v} \rVert \leq 2\eta$. As a result, we have $\lVert (I - P_{\pi(i)}) \mathbf{v} \rVert \leq 2\eta/\alpha(\delta)$. Thus,
    \[
    \lVert(I - P_{\pi(i)}) \mathbf{d}_i \rVert \leq \lVert \mathbf{d}_i - \mathbf{v} \rVert + \lVert (I - P_{\pi(i)}) \mathbf{v} \rVert \leq (1 + 2/\alpha)\eta
    \]
    with the final bound following by the definition of point-set distance in terms of sines.

    Finally, support stability and dictionary stability yields code stability.

    \textbf{Claim.} For any observation $\mathbf{x} = D \mathbf{z} + \boldsymbol{\epsilon} = D' \mathbf{z}' + \boldsymbol{\epsilon'}$, we have $\lVert \mathbf{z} - \Pi \mathbf{z}' \rVert \leq \lVert \mathbf{z} - \Pi \mathbf{z}' \rVert
  \leq \frac{2\epsilon}{\sqrt{1-\delta}}
     + \frac{(2\sqrt{k}\,C_1(\delta)\,\eta)(B+\epsilon)}{1-\delta}
  = \mathcal{O}(\epsilon)$.

    \textbf{Proof of claim.} Let $S$ denote the support of $\mathbf{z}$, and $T$ denote the support $\mathbf{z}'$. We have,
    \begin{align*}
    \lVert \mathbf{z} - \Pi\mathbf{z}' \rVert &= \lVert \mathbf{z}_S - \Pi_T \mathbf{z}_T' \rVert \\
    &\leq \lVert \mathbf{z}_S - D_S^\dagger \mathbf{\hat{x}}_T \rVert + \lVert D_S^\dagger \mathbf{\hat{x}}_T - \Pi_T \mathbf{z}_T' \rVert
    \end{align*}
    where $D_S^\dagger$ is the left inverse of $D_S$, well-conditioned by (A1). Denote by $\bar{D}_S = D' \Pi_S$ be the sign- and permutation-matched dictionary from the second SAE. Then, we have $D_S^\dagger \mathbf{\hat{x}}_T - \Pi_T \mathbf{z}_T' = D_S^\dagger(\bar{D}_S - D_S)(\Pi_S \mathbf{z}_T')$. As a result,
    \begin{align*}
    \lVert \mathbf{z} - \Pi\mathbf{z}' \rVert &\leq \lVert \mathbf{z}_S - D_S^\dagger \mathbf{\hat{x}}_T \rVert + \lVert D_S^\dagger \mathbf{\hat{x}}_T - \Pi_T \mathbf{z}_T' \rVert \\
    &\leq \frac{2\epsilon}{\sqrt{1-\delta}} + \lVert D_S^\dagger \rVert_\text{op} \lVert \bar{D}_S - D_S \rVert_\text{op} \lVert \Pi_S \mathbf{z}_T' \rVert \\
    &\leq \frac{2\epsilon}{\sqrt{1 - \delta}} + \frac{1}{\sqrt{1 - \delta}} (2\sqrt{k}C_1(\delta)\eta)\frac{B + \epsilon}{\sqrt{1 - \delta}} \\
    &= \frac{2\epsilon}{\sqrt{1 - \delta}} + \frac{(2\sqrt{k}C_1(\delta) \eta)(B + \epsilon)}{1 - \delta}
    \end{align*}
    which is $\mathcal{O}(\epsilon)$, the same rate we would hope for if we knew the dictionaries and supports perfectly in advance. Furthermore, the remainder of the constants could be made tighter by more careful treatment of the second term.
\end{proof}


\end{document}